\newtheorem{assumption}{Assumption}
\begin{document}

\title{Faster On-Device Training Using New Federated \\Momentum Algorithm}

\lhead[]{Faster On-Device Training Using New Federated Momentum Algorithm}
\author{Zhouyuan Huo$^1$, Qian Yang$^2$, Bin Gu$^1$, Lawrence Carin$^2$, Heng Huang$^1$}
\email{zhouyuan.huo@pitt.edu}
\affiliation{%
  \institution{$^1$  University of Pittsburgh, $^2$  Duke University}
}


\begin{abstract}
    Mobile crowdsensing has gained significant attention in recent years and has become a critical paradigm for emerging Internet of Things applications. The sensing devices continuously generate a significant quantity of data, which provide tremendous opportunities to develop innovative intelligent applications. To utilize these data to train machine learning models while not compromising user privacy, federated learning has become a promising solution. However, there is little understanding of whether federated learning algorithms are guaranteed to converge. We reconsider model averaging in federated learning and formulate it as a gradient-based method with biased gradients. This novel perspective assists analysis of its convergence rate and provides a new direction for more acceleration. We prove for the first time that the federated averaging algorithm is guaranteed to converge for non-convex problems, without imposing additional assumptions. We further propose a novel accelerated federated learning algorithm and provide a convergence guarantee.  Simulated federated learning experiments are conducted to train deep neural networks on benchmark datasets, and experimental results show that our proposed method converges faster than previous approaches.
\end{abstract}

\maketitle

\section{Introduction}

Mobile crowdsensing is a new paradigm of sensing by taking advantage of the power of various mobile devices, which are penetrating most aspects of modern life and also continuously generate a large amount of data. As new high-speed 5G networks arrive to handle their traffic, the number of connected smart devices is expected to grow further over the next five years \cite{5g}. It is desirable to utilize these data to improve model performance and maximize the user experience. Traditional distributed optimization methods are able to train models when all datasets are stored in the cluster \cite{dean2012large}.  However, the increasing awareness of user privacy and data security issues prevents storing and training a model on a centralized server \cite{yang2019federated}.  It therefore becomes a major challenge to train a model with massive distributed and heterogeneous datasets without compromising user privacy.

\begin{figure}[!tb]
	\centering
	\includegraphics[width=3.in]{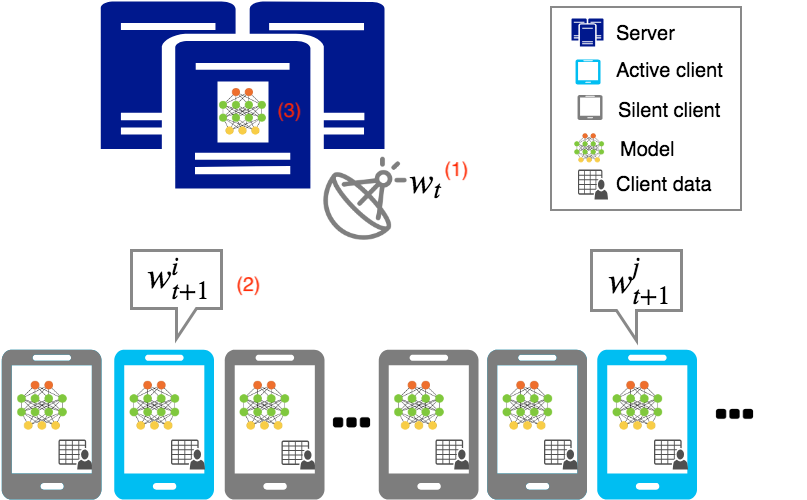}
	\caption{Federated learning procedure: (1) Server selects a set of Active clients and broadcasts model $w_t$. (2) After receiving model from Server, Active clients conduct the update locally and send the updated model ($e.g.$, $w_{t+1}^i$ and $w_{t+1}^j$ in the figure) back to Server. 3) Server updates the model $w_{t+1}$.  Steps 1-3 are repeated until convergence.}
	\label{intro}
\end{figure}

Federated learning \cite{mcmahan2016communication,yang2019federated,kairouz2019advances} is a promising solution for machine learning model training using crowdsensing data without compromising user privacy. It has been widely applied for mobile keyboard prediction \cite{hard2018federated}, private language models \cite{mcmahan2017learning}, and financial-client classification \cite{cheng2019secureboost}. As shown in Figure \ref{intro}, the data from each client are never uploaded to the server during the optimization period. Instead, each client conducts updates locally for several iterations and sends the updated model back to the server. At each iteration, the server only has access to a  small fraction of clients, and updates the server model after receiving from these active clients. The Google Gboard team trains a recurrent neural network (RNN) with $1.4$ million parameters for the next-word prediction task \cite{hard2018federated}, where a round of training takes $2$ to $3$ minutes. It requires roughly $3000$ rounds to converge, with a total running time of over $5$ days -- taking much more wall-clock time than centralized training on the same task \cite{dean2012large}. It is nontrivial to accelerate the training of federated learning.
Furthermore, \cite{bonawitz2019towards} reports a large difference in the number of participating devices over a 24-hour period for a US-centric client population, which consequently has an impact on the round-completion rate. Because the communication between server and clients is unstable and expensive,  \cite{konevcny2016federated,caldas2018expanding,jeong2018communication} proposed new techniques to reduce communication and accelerate federated-learning training. Nonetheless, few studies provide a convergence guarantee of federated learning algorithms, especially for non-convex problems. There are two difficulties that make this a challenging problem: the data are non-independent identical distribution (non-IID) and there is limited communication between server and clients.

Synchronous and asynchronous gradient-based methods have already been proven to converge to critical points for non-convex problems \cite{li2014efficient,lian2015asynchronous,reddi2015variance}.  To reduce communication overhead, local stochastic gradient descent (local SGD) is studied in the field of distributed optimization for clusters \cite{stich2018local,lin2018don,yu2018parallel}. In \cite{lin2018don}, the authors showed that local SGD converges faster than mini-batch gradient descent with fewer communications. \cite{stich2018local,zhou2017convergence,yu2018parallel} investigated the convergence of local SGD and proved that it is guaranteed to converge for strongly convex or non-convex problems. Another line of communication-efficient distributed methods \cite{lee2015distributed,zhang2015disco,ma2015adding} performed model averaging after solving local subproblems and are guaranteed to converge. However, the above distributed methods either assume that the dataset is distributed IID or requires the server to collect updates from all workers at each iteration. None of these approaches is applicable to federated learning.

We investigate the model averaging in federated learning from a new point of view and formulate it as a gradient-based method with biased gradients. This novel perspective helps the analysis of its convergence rate and motivates a new accelerated method. Our main contributions in this paper are summarized as follows:
\begin{itemize}
	\item We investigate the model averaging step of the FedAvg algorithm \cite{mcmahan2016communication} and derive the first convergence proof for non-convex problems;
	\item A novel algorithm is proposed to accelerate federated optimization, and it also provides a convergence analysis for non-convex problems;
	\item  We perform simulated federated learning experiments with training deep neural networks, and the empirical results show that the proposed method converges faster than previous approaches.
\end{itemize}

\section{Related Works}
\textbf{Distributed Learning.}
When a dataset $\mathcal{P}$ of $n$ samples is centrally stored and partitioned across $K$ machines or clients $\mathcal{P} = \{\mathcal{P}_1,\mathcal{P}_2,\cdots,\mathcal{P}_K\}$, we use distributed optimization methods to train machine learning models.  $\mathcal{P}_k$ represents the set of data indices on worker $k$, and we let  $|\mathcal{P}_k| = n_k$ and $n = \sum_{k=1}^K n_k$. The target of  distributed learning is to minimize a weighted finite-sum loss of all clients as follows:
\begin{eqnarray}
	\min\limits_{w \in \mathbb{R}^d}  \hspace{0.1cm}  \left\{ f(w) \hspace{0.1cm} :=  \hspace{0.1cm}    \sum\limits_{k=1}^K \frac{n_k}{n} f_k(w)\right\}  ,
	\label{obj}
\end{eqnarray}
where $f(w)$ and $f_k(w)$ are non-convex problems. $f_k(w)$ denotes a subset of loss on client $k$ and  $f_k(w) :=  \mathbb{E}_{\xi \in \mathcal{P}_k} [f_i(w, \xi)]$, where $\xi$ is a sampled data from $\mathcal{P}_k$.  If the dataset is randomly partitioned and $n_k = {n}/{K}$, $f(w)$ can also be represented as $\frac{1}{K} \sum_{k=1}^K f_k(w)$. Distributed mini-batch gradient methods have been used widely to train deep neural networks \cite{dean2012large,chen2016revisiting,you2017large}. At each iteration, gradients are computed on clients and aggregated on the server. However, this method suffers severely from network delays and bandwidth limits. To overcome the communication bottleneck, \cite{zhang2015deep,lin2018don} allowed workers to perform local updates for a while and average local models on the server periodically. In \cite{stich2018local}, the authors analyzed the convergence of local SGD for convex problems. Recently,  \cite{zhou2017convergence,yu2018parallel,wang2018cooperative} applied local SGD to non-convex problems and proved that it guarantees convergence as well.  However, local SGD requires averaging local models from all clients, which is not realistic for federated learning.

\textbf{Federated Learning.} Instead of training machine learning models with centrally stored data, as with distributed learning, federated learning seeks to perform large-scale learning on a massive number of distributed mobile devices with  heterogeneous data \cite{mcmahan2016communication,smith2017federated}. We summarize the differences between distributed and federated learning in Table \ref{table:fed}. There are two challenges in federated learning: ($i$) the datasets are massive, distributed, and heterogeneous, so that $K$ is extremely large and $n_k$ is highly unbalanced in problem (\ref{obj}); ($ii$) communications between server and clients are highly unstable, so that only a small fraction of clients are active within one iteration.  To improve federated learning empirically, \cite{konevcny2016federated} investigated reducing the uplink (clients $\rightarrow$ server) communication cost through message compression. Later, \cite{caldas2018expanding} proposed to reduce the downlink (server $\rightarrow$ clients) communication cost by training smaller sub-models. However, few studies have analyzed the convergence of federated learning methods. \cite{li2019convergence} analyzed the convergence of federated averaging algorithm for convex problems.  Recently, \cite{sahu2018convergence} proved that federated learning is guaranteed to converge for non-convex problems. However, the authors considered a different federated learning algorithm,  imposing a new regularization and considered unrealistic assumptions, such as strongly convex subproblems and bounded dissimilarity of local data.


\begin{table}[!t]
	\center
	\caption{Comparisons of settings between federated learning and normal distributed learning. IID denotes independent and identically distributed.}
	\setlength{\tabcolsep}{3mm}
	\begin{tabular}{ccc}
		\hline						
		&  \textbf{Distributed}  & \textbf{Federated }   \\
		\hline
		\multirow{2}{*}{\textbf{Data}}	& IID   & Non-IID \\
													& Balanced  &  Unbalanced\\
		\hline
		\multirow{2}{*}{\textbf{\small Commun-}}	& Centralized cluster & { Massively distributed  } \\
		\textbf{-ication} &   Stable & {   Limited } \\
		\hline						
	\end{tabular}
	\label{table:fed}
\end{table}

\section{Federated Averaging Algorithm}
\label{sec::fedavg}
We first briefly introduce the Federated Averaging algorithm (FedAvg) proposed in \cite{mcmahan2016communication}. We then reformulate the model averaging in FedAvg as a gradient-based method and prove that it is guaranteed to converge to critical solutions for non-convex problems.

\subsection{Algorithm Description}
According to the setting of federated learning, we assume there is one server and $K$ clients, where $K$ is a large value. Algorithm \ref{alg_server} summarizes the procedures of FedAvg on the server.  At iteration $t$, the server is connected with a set of  active clients $S_t$ with the size $M$, where $M \ll K$. After broadcasting model $w_t$ to active clients $S_t$, the server waits until receiving updated model $w_{t+1}^k$ through Algorithm \ref{alg_worker} from $S_t$. Finally, model  $w_t$  is updated on the server via model averaging:
\begin{eqnarray}
	w_{t+1} &= & \sum\limits_{k=1}^K \frac{n_k}{n} w_{t+1}^k \hspace{0.1cm} =  \hspace{0.1cm}  \sum\limits_{k \in S_t} \frac{n_k}{n} w_{t+1}^k + \sum\limits_{k \notin S_t} \frac{n_k}{n} w_{t}.
	\label{iq_avg}
\end{eqnarray}
Note that $w^{t+1}$ is averaging local updated models from {\em all} clients in (\ref{iq_avg}) by setting $w_{t+1}^k=w_t$ for any $k\notin S_t$,  which is consistent with the FedAvg algorithm in the original paper \cite{mcmahan2016communication}. The right term in (\ref{iq_avg}) enforces that $w_{t+1}$ stay close to the current server model $w_t$ implicitly if $M\ll K$.  In \cite{sahu2018convergence}, the authors updated the server model  using local models from only active clients through $w_{t+1} = \frac{1}{M} \sum_{k \in S_t} w_{t+1}^k$. Because of this step, they have to impose new regularization on local subproblems and bring in an additional assumption about data distributions for convergence analysis.

Algorithm \ref{alg_worker}  describes the local solver on clients.  After receiving model $w_t$ from the server, client $k$ applies SGD locally and updates $w_h^k$ iteratively for $H_t$ iterations. Finally, it sends the updated local model $w_{t+1}^k $ back to the server. The local solver can also be any gradient-based method, such as Momentum method \cite{qian1999momentum,yang2016unified}, RMSProp \cite{hinton2012neural}, Adam \cite{kingma2014adam} or AdamW \cite{loshchilov2017fixing}. We only consider SGD in this paper, for simplicity.

\begin{figure}[t]
	\begin{minipage}[t]{0.48\textwidth}
		\begin{algorithm}[H]
			\renewcommand{\algorithmicrequire}{\textbf{Phase 2:}}
			\renewcommand{\algorithmicensure}{\textbf{Initialize:}}
			\caption{Federated Averaging Algorithm  \hfill [Server]}
			\begin{algorithmic}[1]
				\ENSURE  $w_0$; 
				\FOR{$t=0,1,\cdots,T-1$}
				\STATE  $S_{t} =$  a randomly sampled set of $M$ clients $(M\ll K)$ ;
				\FOR{each client $k \in S_{t}$ \textbf{in parallel}}
				\STATE Send $w_t$ to client $k$;
				\STATE  Receive $w^k_{t+1} $ from client $k$ via Algorithm \ref{alg_worker};
				\ENDFOR	
				\STATE $w_{t+1}^k = w_t$ for any $k \notin S_{t}$;
				\STATE Update $w_{t+1} =  \sum_{k=1}^K \frac{n_k}{n} w_{t+1}^k.$
				\ENDFOR	
			\end{algorithmic}
			\label{alg_server}		
		\end{algorithm}
	\end{minipage}%
	\hspace{0.3cm}
	\begin{minipage}[t]{0.48\textwidth}
		\begin{algorithm}[H]
			\renewcommand{\algorithmicrequire}{\textbf{Phase 2:}}
			\renewcommand{\algorithmicensure}{\textbf{Initialize:}}
			\caption{Federated Averaging Algorithm  \hfill [Client $k$]}
			\begin{algorithmic}[1]
				\STATE Receive  $w_t$ from the server;
				\STATE Initialize $w^k_{t, 0} = w_t$;
				\FOR{$h=0,1,\cdots,H_t-1$}
				\STATE Select sample $\xi$ randomly from $\mathcal{P}_k$;
				\STATE Update $w^k_{t, h+1} = w^k_{t, h} - \gamma_t \nabla f_{k}(w^k_{t,h}, \xi)$;
				\ENDFOR	
				\STATE Set $w_{t+1}^k = w^k_{t, H_t}$;
				\STATE Send $w_{t+1}^k$  back to the server.
			\end{algorithmic}
			\label{alg_worker}		
		\end{algorithm}
	\end{minipage}
\end{figure}

\subsection{Model Averaging Is a Gradient-Based Method with Biased Gradients}
Clients in federated learning are mobile devices, and the server is a cluster and able to do more computations  not limited to model averaging. In this paper, we reconsider the model averaging at Line 8  in Algorithm \ref{alg_server}, and formulate it as an update of gradient-based methods as follows:
\begin{eqnarray}
	w_{t+1} \hspace{0.1cm}=\hspace{0.1cm}  w_{t} -  g_t, \hspace{0.5cm}  \text{ where } \hspace{0.2cm}   g_t  \hspace{0.1cm}= \hspace{0.1cm} \sum\limits_{k=1}^K  \frac{n_k}{ n }\left( w_t - w_{t+1}^k  \right).
	\label{iq_1_0000}
\end{eqnarray}
$g_t$ denotes a biased gradient on the server at iteration $t$, because $\mathbb{E}_{S_t}[g_t] \neq \nabla f(w_t)$. Equations   (\ref{iq_avg}) and (\ref{iq_1_0000}) are equivalent.   Given that $w_{t+1}^k = w_t$ for any $k \notin S_t$,  we can also rewrite $g_t= \sum_{k \in S_t}  \frac{n_k}{ n }  \sum_{h=0}^{H_t-1} \nabla f_{k}(w^k_{t,h}, \xi)$  in (\ref{iq_1_0000}) using gradients computed on clients in Algorithm \ref{alg_worker}.
To generalize (\ref{iq_1_0000}), we set a constant learning rate $\eta  \in \left[ 1,  \frac{K}{M}\right]$ and  obtain the following update function in the server:
\begin{eqnarray}
	w_{t+1} &= &  w_{t} - \eta  \sum_{k \in S_t}  \frac{n_k}{ n }  \sum_{h=0}^{H_t-1} \nabla f_{k}(w^k_{t,h}, \xi). \nonumber
\end{eqnarray}
If $\eta=1$, it is equivalent to (\ref{iq_1_0000}).
By rewriting model averaging as a gradient-based method, we can easily utilize existing theoretical analysis and many improved algorithms of gradient-based methods. In the following context, we provide the convergence analysis of FedAvg for non-convex problems, and propose a novel accelerated federated learning algorithm.

\textbf{Difficulty of the Convergence Analysis:}  It is difficult to analyze the convergence of FedAvg, due to its biased gradient.
Conventional analysis for gradient-based methods requires $\mathbb{E} [g_t] = \nabla f(w_t)$. However, this is not satisfied in FedAvg. 
Apart from the unbiased gradients, limited communication in federated learning also increases the difficulty of analysis.

\subsection{Convergence Analysis}
To prove the convergence rate of FedAvg, we assume that two widely used assumptions \cite{bottou2016optimization}, Bounded Variance and Lipschitz Continuous Gradient, are satisfied throughout this paper.
\begin{assumption}[Bounded Variance]
	We assume that the variance of stochastic gradient on local clients is upper bounded, so that for any $w\in \mathbb{R}^d$ and $k \in \{1,...,K\}$, it is satisfied that:
	\begin{eqnarray}
		\mathbb{E}_{\xi \sim \mathcal{P}_k} \left\|  \nabla f_k(w, \xi) -   \nabla f_k(w) \right\|_2^2   &\leq &   \sigma^2. \nonumber
	\end{eqnarray}
	\label{ass_bd}
\end{assumption}

\begin{assumption}[Lipschitz Continuous Gradient)]
	The gradients of $f$ and $f_k$ are Lipschitz continuous with a constant $L>0$, so that for any $w, v\in\mathbb{R}^d$ and $k \in \{1,...,K\}$, it is satisfied that:
	\begin{eqnarray}
		\|\nabla f_k(w) - \nabla f_k(v) \|_2 &\leq& L  \| w -  v \|_2.  \nonumber \\
				\|\nabla f(w) - \nabla f(v) \|_2 &\leq& L  \| w -  v \|_2. \nonumber
	\end{eqnarray}
	\label{ass_lips}
\end{assumption}
Under Assumptions \ref{ass_bd} and \ref{ass_lips}, we can prove the upper bound of FedAvg at each iteration.
\begin{lemma}
	Under Assumptions \ref{ass_bd} and \ref{ass_lips}, the update of $w_t$ on the server at each iteration is upper bounded as follows:
	\begin{eqnarray}
	&	\mathbb{E}_{\xi,k} f(w_{t+1}) \hspace{0.2cm} \leq  \hspace{0.2cm}  f(w_t ) -  \frac{ MH_t \eta \gamma_t}{2K}  \|  \nabla f(w_t) \|_2^2 \nonumber \\
	&+ \left( \frac{M L H_t \eta^2 \gamma_t^2}{2K} + \frac{M  L^2 \eta H_t^2\gamma_t^3}{2K}    \right) \sigma^2 	- \bigg( \frac{M \eta \gamma_t }{2K}  \nonumber \\
	& -  \frac{ML^2 \eta H_t^2\gamma_t^3}{2K} - \frac{M L H_t \eta^2 \gamma_t^2}{2K}  \bigg) \sum\limits_{h=0}^{H_t-1} \sum\limits_{k=1}^K  \frac{n_k}{n}  \mathbb{E}_{\xi, k} \| \nabla f_k (w_{t, h}^k) \|_2^2.
	\label{iq_301}
	\end{eqnarray}
	\label{lem1}
\end{lemma}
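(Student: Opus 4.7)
The plan is to begin from the $L$-smoothness expansion
\begin{equation*}
f(w_{t+1}) \leq f(w_t) + \langle \nabla f(w_t), w_{t+1}-w_t\rangle + \tfrac{L}{2}\|w_{t+1}-w_t\|_2^2,
\end{equation*}
substitute the server update $w_{t+1}-w_t = -\eta\gamma_t\sum_{k\in S_t}\tfrac{n_k}{n}\sum_{h=0}^{H_t-1}\nabla f_k(w_{t,h}^k,\xi)$, and take the conditional expectation factored into sampling $S_t$ (with $\Pr[k\in S_t]=M/K$ under uniform $M$-of-$K$ sampling) and the fresh stochastic samples (conditionally unbiased with per-step variance at most $\sigma^2$ by Assumption~\ref{ass_bd}). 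This reduces the problem to three pieces: the expected inner product, the expected $\|w_{t+1}-w_t\|_2^2$, and the local drift $\mathbb{E}\|w_t-w_{t,h}^k\|_2^2$ that appears inside the inner product.

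For the inner product, after sampling and conditional unbiasedness the expression becomes $-\tfrac{M\eta\gamma_t}{K}\sum_{k,h}\tfrac{n_k}{n}\mathbb{E}\langle\nabla f(w_t),\nabla f_k(w_{t,h}^k)\rangle$. I would apply the polarization identity $\langle a,b\rangle=\tfrac12(\|a\|_2^2+\|b\|_2^2-\|a-b\|_2^2)$ to each $(k,h)$ pair. The $\|a\|_2^2$ contributions collapse (using $\sum_k\tfrac{n_k}{n}=1$) to the advertised $-\tfrac{MH_t\eta\gamma_t}{2K}\|\nabla f(w_t)\|_2^2$, the $\|b\|_2^2$ contributions form the negative $-\tfrac{M\eta\gamma_t}{2K}\sum_{h,k}\tfrac{n_k}{n}\mathbb{E}\|\nabla f_k(w_{t,h}^k)\|_2^2$ piece of the final bracket, and the $\|a-b\|_2^2$ residual is bounded via Assumption~\ref{ass_lips} by $L^2\mathbb{E}\|w_t-w_{t,h}^k\|_2^2$. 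I would then unroll the local SGD recursion to establish
\begin{equation*}
\mathbb{E}\|w_t-w_{t,h}^k\|_2^2 \leq \gamma_t^2\Big(h\sum_{j=0}^{h-1}\mathbb{E}\|\nabla f_k(w_{t,j}^k)\|_2^2 + h\sigma^2\Big),
\end{equation*}
using Cauchy-Schwarz on the mean part and the orthogonality of the martingale-difference noise at distinct inner steps on the variance part. Summing over $h\leq H_t$ and combining with the prefactor $\tfrac{ML^2\eta\gamma_t}{2K}$ yields the $\tfrac{ML^2\eta H_t^2\gamma_t^3}{2K}\sigma^2$ and $+\tfrac{ML^2\eta H_t^2\gamma_t^3}{2K}\sum_{h,k}\tfrac{n_k}{n}\mathbb{E}\|\nabla f_k(w_{t,h}^k)\|_2^2$ contributions.

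For the quadratic piece $\tfrac{L}{2}\mathbb{E}\|w_{t+1}-w_t\|_2^2$, I would again pull the inner sum out using Cauchy-Schwarz on the $H_t$ inner iterations, then split each stochastic gradient into mean plus noise via $\mathbb{E}\|X\|_2^2=\|\mathbb{E}X\|_2^2+\mathbb{E}\|X-\mathbb{E}X\|_2^2$, and account for the sampling through the factor $M/K$. Because the noises at different inner steps are orthogonal conditional on past iterates, the cross terms vanish, leaving exactly the $\tfrac{MLH_t\eta^2\gamma_t^2}{2K}\sigma^2$ variance contribution and the $\tfrac{MLH_t\eta^2\gamma_t^2}{2K}\sum_{h,k}\tfrac{n_k}{n}\mathbb{E}\|\nabla f_k(w_{t,h}^k)\|_2^2$ mean-squared contribution. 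Summing the three pieces and collecting the coefficients of $\|\nabla f(w_t)\|_2^2$, $\sigma^2$, and $\sum_{h,k}\tfrac{n_k}{n}\mathbb{E}\|\nabla f_k(w_{t,h}^k)\|_2^2$ reproduces the claimed inequality.

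The main obstacle will be the coefficient bookkeeping when the two expectations (over $S_t$ and over $\xi$) interact through the indicator $\mathbb{1}[k\in S_t]$, and in particular extracting the $\tfrac{M}{K}$ and $\tfrac{n_k}{n}$ factors consistently so that the exponents of $L$, $H_t$, $\eta$, and $\gamma_t$ in the final bound line up. A secondary but crucial subtlety is that a naive triangle-inequality split of the polarization residual through $\nabla f_k(w_t)$ would introduce an inter-client gradient dissimilarity $\|\nabla f(w_t)-\nabla f_k(w_t)\|_2^2$ that Assumptions~\ref{ass_bd} and~\ref{ass_lips} do not control; the proof must apply the Lipschitz bound directly to $\nabla f(w_t)-\nabla f_k(w_{t,h}^k)$ through the drift $\|w_t-w_{t,h}^k\|_2^2$, which is what keeps the analysis assumption-free, and relying on the martingale structure of the inner-loop noise is what keeps the $\sigma^2$ coefficient at the tight $H_t^2\gamma_t^3$ order rather than a looser $H_t^3\gamma_t^3$.
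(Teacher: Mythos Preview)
Your overall plan mirrors the paper's proof: smoothness expansion, substitution of the server update, expectation over $S_t$ and $\xi$, polarization on the inner product, unrolling the local drift, and the mean-plus-noise decomposition of the quadratic term. Where your proposal breaks down is precisely at the step you flag as the ``crucial subtlety.'' You write that the polarization residual $\|\nabla f(w_t)-\nabla f_k(w^k_{t,h})\|_2^2$ can be ``bounded via Assumption~\ref{ass_lips} by $L^2\mathbb{E}\|w_t-w^k_{t,h}\|_2^2$,'' but Assumption~\ref{ass_lips} only gives Lipschitz continuity of $\nabla f$ and of each $\nabla f_k$ \emph{separately}; it says nothing about the distance between $\nabla f$ at one point and $\nabla f_k$ at another. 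The bound $\|\nabla f(w_t)-\nabla f_k(w^k_{t,h})\|_2\le L\|w_t-w^k_{t,h}\|_2$ is false in general under the stated assumptions: take $h=0$, so $w^k_{t,0}=w_t$ and the right side vanishes, while the left side is exactly the client heterogeneity $\|\nabla f(w_t)-\nabla f_k(w_t)\|_2$. The very dissimilarity term you are trying to avoid is hidden inside the step you propose.

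The paper handles this point differently: it runs the polarization with $a=\nabla f_k(w_t)$ rather than $a=\nabla f(w_t)$, so the residual is $Q_1=\mathbb{E}\|\nabla f_k(w_t)-\nabla f_k(w^k_{t,h})\|_2^2$, to which the $f_k$-Lipschitz bound legitimately applies and your drift unrolling then goes through verbatim. The descent piece becomes $-\sum_k\tfrac{n_k}{n}\|\nabla f_k(w_t)\|_2^2$, which is relaxed to $-\|\nabla f(w_t)\|_2^2$ by Jensen (correct direction). To reproduce the paper's argument you therefore need to first pass from the expected inner product $\langle\nabla f(w_t),\sum_k\tfrac{n_k}{n}\nabla f_k(w^k_{t,h})\rangle$ to the per-client form $\sum_k\tfrac{n_k}{n}\langle\nabla f_k(w_t),\nabla f_k(w^k_{t,h})\rangle$; the paper justifies this replacement only by the remark ``$\sum_k\tfrac{n_k}{n}\nabla f_k(w_t)=\nabla f(w_t)$,'' so if you follow their route you should scrutinize that step carefully rather than rely on a direct Lipschitz estimate of the mixed difference $\nabla f(w_t)-\nabla f_k(w^k_{t,h})$, which is not available.
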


\begin{proof}
	According to  Assumption \ref{ass_lips}, it holds that:
\begin{eqnarray}
	 \mathbb{E}_{\xi} 	f(w_{t+1})	&\leq & f(w_t) +\mathbb{E}_{\xi}  \left< \nabla f(w_t) , w_{t+1} - w_t \right> \nonumber \\
	 &&+ \frac{L}{2}\mathbb{E}_{\xi}  \left\| w_{t+1} - w_t \right\|_2^2 \nonumber
\end{eqnarray}	
 Taking an expectation over samples on the inequality above, we have:
	{\small
		\begin{eqnarray}
		&& \mathbb{E}_{\xi} 	f(w_{t+1}) \nonumber \\
		&\leq & f(w_t) +\mathbb{E}_{\xi}  \left< \nabla f(w_t) , w_{t+1} - w_t \right> + \frac{L}{2}\mathbb{E}_{\xi}  \left\| w_{t+1} - w_t \right\|_2^2 \nonumber \\
		&\leq &  f(w_t ) - \mathbb{E}_{ \xi}  \left< \nabla f(w_t) ,  \eta \gamma_t\sum\limits_{k \in S_t} \frac{n_k}{n} \sum\limits_{h=0}^{H_t-1}   \nabla f_k (w_{t, h}^k, \xi) \right>  \nonumber \\
		&&	 + \frac{L}{2}\mathbb{E}_{ \xi}   \left\| \eta\gamma_t \sum\limits_{k \in S_t} \frac{n_k}{n}  \sum\limits_{h=0}^{H_t-1}  \nabla f_k (w_{t, h}^k, \xi) \right\|_2^2 \nonumber \\
		&= &  f(w_t ) -   \left< \nabla f(w_t) ,  \eta\gamma_t \sum\limits_{k \in S_t} \frac{n_k}{n} \sum\limits_{h=0}^{H_t-1}   \nabla f_k (w_{t, h}^k) \right>  \nonumber \\
		&&+ \frac{L}{2}\mathbb{E}_{ \xi}   \left\|\eta \gamma_t \sum\limits_{k \in S_t} \frac{n_k}{n}  \sum\limits_{h=0}^{H_t-1}  \nabla f_k (w_{t, h}^k, \xi) \right\|_2^2,
		\label{iq_20001}
		\end{eqnarray}}
	where the inequality follows from $\mathbb{E}_{\xi}[\nabla f_k(w, \xi)] = \nabla f_k(w) $.
	Because all the clients are selected randomly with a uniform distribution, we take expectation over clients and  have:
	\begin{eqnarray}
	&& \mathbb{E}_{\xi, k} f(w_{t+1}) \nonumber\\
	& \leq &  f(w_t ) - \frac{M\eta\gamma_t }{K}  \sum\limits_{h=0}^{H_t-1}   \sum\limits_{k=1}^K  \frac{n_k}{n}   \mathbb{E}_{\xi, k}  \left< \nabla f_k(w_t) ,  \nabla f_k (w_{t, h}^k) \right>  \nonumber \\
	&& + \frac{L}{2K} \sum\limits_{k=1}^K   \mathbb{E}_{\xi, k}    \left\| \eta \gamma_t \sum\limits_{k \in S_t} \frac{n_k}{n} \sum\limits_{h=0}^{H_t-1}  \nabla f_k (w_{t, h}^k, \xi) \right\|_2^2 \nonumber \\
	&\leq &     f(w_t ) - \frac{M\eta\gamma_t }{K}  \sum\limits_{h=0}^{H_t-1}   \sum\limits_{k=1}^K  \frac{n_k}{n} \mathbb{E}_{\xi, k}   \left< \nabla f_k(w_t) ,  \nabla f_k (w_{t, h}^k) \right>  \nonumber \\
	&& + \frac{M  L \eta^2\gamma_t^2}{2K}    \sum\limits_{k=1}^K  \frac{n_k}{n}  \mathbb{E}_{\xi, k}  \left\|   \sum\limits_{h=0}^{H_t-1}  \nabla f_k (w_{t, h}^k, \xi) \right\|_2^2 \nonumber \\
	&= &    f(w_t ) -  \frac{MH_t\eta \gamma_t }{2K}  \sum\limits_{k=1}^K  \frac{n_k}{n}    \mathbb{E}_{\xi, k}  \|  \nabla f_k(w_t) \|_2^2  \nonumber \\
	&& - \frac{M\eta\gamma_t }{2K} \sum\limits_{h=0}^{H_t-1} \sum\limits_{k=1}^K  \frac{n_k}{n}  \mathbb{E}_{\xi, k}  \left\|    \nabla f_k (w_{t, h}^k) \right\|_2^2 \nonumber \\
	&& +\frac{M \eta\gamma_t }{2K} \sum\limits_{h=0}^{H_t-1}\sum\limits_{k=1}^K  \frac{n_k}{n} \underbrace{ \mathbb{E}_{\xi, k} \left\|  \nabla f_k(w_t) -    \nabla f_k (w_{t, h}^k) \right\|_2^2}_{Q_1}  \nonumber \\
	&&  + \frac{M L\eta^2\gamma_t^2}{2K}   \sum\limits_{k=1}^K  \frac{n_k}{n}     \underbrace{ \mathbb{E}_{\xi, k}   \left\| \sum\limits_{h=0}^{H_t-1 }   \nabla f_k (w_{t, h}^k, \xi) \right\|_2^2 }_{Q_2},
	\label{iq_10001}
	\end{eqnarray}
	where the first inequality follows from  $  \mathbb{E}_k \sum\limits_{k \in S_t} \frac{n_k}{n}  \nabla f_k (w_{t, h}^k) =  \frac{M}{K}\sum\limits_{k=1}^K  \frac{n_k}{n}  \nabla f_k (w_{t, h}^k) $ and $\sum\limits_{k=1}^K  \frac{n_k}{n} \nabla f_k(w_t) = \nabla f(w_t)$, the second inequality follows from Jensen's inequality,  and the last equality follows from $\left<a, b\right> = \frac{1}{2} (\|a\|_2^2 + \|b\|_2^2- \|a-b\|_2^2 )  $. We prove the upper bound of $Q_1$ as follows:
	\begin{eqnarray}
	Q_1  &=&  \mathbb{E}_{\xi, k}  \|  \nabla f_k(w_t)  -    \nabla f_k (w_{t, h}^k) \|_2^2 \nonumber \\
	&\leq & L^2  \mathbb{E}_{\xi, k}  \left\| \sum\limits_{j=0}^{h-1} \left( w^k_{t,j+1}   -    w_{t, j}^k \right) \right\|_2^2 \nonumber \\
	&= & L^2 \gamma_t^2   \mathbb{E}_{\xi, k}  \left\|    \sum\limits_{j=0}^{h-1}\left(  \nabla f_k(w_{t, j}^k, \xi)  -  \nabla f_k(w^k_{t,j}) +  \nabla f_k(w^k_{t,j} )  \right) \right\|_2^2, \nonumber \\
	& = & L^2 \gamma_t^2   \mathbb{E}_{\xi, k}  \left\|    \sum\limits_{j=0}^{h-1}  \left( \nabla f_k(w_{t, j}^k, \xi)  -  \nabla f_k(w^k_{t,j}) \right) \right\|_2^2 \nonumber \\
	&& + L^2 \gamma_t^2   \mathbb{E}_{\xi, k}  \left\| \sum\limits_{j=0}^{h-1}  \nabla f_k(w^k_{t,j} )  \right\|_2^2, \nonumber \\
	&\leq & h L^2 \gamma_t^2 \sigma^2 + L^2 \gamma_t^2 h \sum\limits_{j=0}^{h-1}   \mathbb{E}_{\xi, k}  \left\|   \nabla f_k(w^k_{t,j} )  \right\|_2^2, \nonumber
	\end{eqnarray}
	where the first inequality follows from Assumption \ref{ass_lips} and $w_t = w_{t,0}^k$, the second inequality follows from $\|z_1 + ... + z_n \|_2^2 \leq  n ( \|z_1\|_2^2 + ... + \|z_n\|_2^2)$	for any $z_1$, ... , $z_n$ and $\mathbb{E} \|z_1 + ... + z_n \|_2^2 \leq \mathbb{E} \left[   \|z_1\|_2^2 + ... + \|z_n\|_2^2\right]$	for any random variable $z_1$, ... , $z_n$ with mean $0$.
 The last inequality follows from Assumption \ref{ass_bd}.  Summing the inequality above from $h=0$ to $H_t-1$, we know that:
	\begin{eqnarray}
	\sum\limits_{h=0}^{H_t-1}  Q_1 &\leq & L^2 H_t^2 \gamma_t^2 \sigma^2  + L^2 \gamma_t^2H_t^2  \sum\limits_{h=0}^{H_t-1}  \mathbb{E}_{\xi, k}  \left\| \nabla f_k(w_{t, h}^k) \right\|_2^2. \nonumber
	\end{eqnarray}
	To get the upper bound of $Q_2$, we have:
	\begin{eqnarray}
	Q_2 &=  &  \mathbb{E}_{\xi, k}    \left\|  \sum\limits_{h=0}^{H_t-1}  \left( \nabla f_k (w_{t, h}^k, \xi)  - \nabla f_k (w_{t, h}^k) + \nabla f_k (w_{t, h}^k) \right)  \right\|_2^2 \nonumber \\
	& =    & \mathbb{E}_{\xi, k}    \left\|\sum\limits_{h=0}^{H_t-1}   \left(  \nabla f_k (w_{t, h}^k, \xi)  - \nabla f_k (w_{t, h}^k) \right) \right\|_2^2  \nonumber \\
	&& +\mathbb{E}_{\xi, k}  \left\| \sum\limits_{h=0}^{H_t-1} \nabla f_k (w_{t, h}^k)   \right\|_2^2  \nonumber \\
	&  \leq   & H_t  \sigma^2  + H_t \sum\limits_{h=0}^{H_t-1} \mathbb{E}_{\xi, k} \left\| \nabla f_k (w_{t, h}^k)\right\|_2^2,
	\label{iq_601}
	\end{eqnarray}
	where the second equality follows from $\mathbb{E}_{\xi} [ \nabla f_k (w_{t, h}^k, \xi)]  = \nabla f_k (w_{t, h}^k) $ and the inequality is from Assumption \ref{ass_bd}. Inputting $Q_1$ and $Q_2$  into inequality (\ref{iq_10001}), we complete the proof.
	
\end{proof}

From Lemma \ref{lem1}, we can ensure the convergence of $f$ at each iteration as long as $\gamma_t$ is properly selected and $ 1 -  {L^2 H_t^2\gamma_t^2} - { L H_t\eta \gamma_t}   \geq 0$. According to Lemma \ref{lem1}, we can prove the convergence of FedAvg for non-convex problems as follows.
\begin{theorem}
	\label{thm1}
	Assume that Assumptions  \ref{ass_bd} and \ref{ass_lips} hold. We let local iteration $H_t=H$,  $\eta \in \left[1, \frac{K}{M} \right]$, and stepsize sequence $\{\gamma_t\}$ satisfies $\gamma_t \leq \min \left\{ \frac{1}{2LH_t}, \frac{1}{4\eta LH_t} \right\}$ for all $t \in \{0,...,T-1 \}$. In addition, we assume loss $f$ has a lower bound $f_{\inf}$ and let $\Gamma_{T} = \sum_{t=0}^{T-1} { \gamma_t } $. Then, the output of Algorithm \ref{alg_server} satisfies that:
	\begin{eqnarray}
		\min\limits_{t \in \{0,...,T-1\}}  \mathbb{E}_{\xi, k}  \|  \nabla f(w_t) \|_2^2 & \leq & \frac{2K \left(f(w_0) - f_{\inf}  \right)}{ \eta M H \Gamma_{T}} \nonumber \\
		&&+    \frac{ L    \sigma^2 (2\eta + 1) }{2}  \frac{\sum\limits_{t=0}^{T-1}  \gamma_t^2}{\Gamma_{T}}. \nonumber
	\end{eqnarray}
\end{theorem}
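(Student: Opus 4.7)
The plan is to start from Lemma \ref{lem1} (with $H_t=H$) and use the stepsize assumption to eliminate the troublesome sum involving $\|\nabla f_k(w_{t,h}^k)\|_2^2$, then telescope and convert an average into a minimum in the standard non-convex SGD style. Specifically, the coefficient of that sum in Lemma \ref{lem1} is $\frac{M\eta\gamma_t}{2K}\bigl(1 - L^2 H^2\gamma_t^2 - L H \eta \gamma_t\bigr)$, so the first step is to check that $\gamma_t\le\min\{1/(2LH), 1/(4\eta LH)\}$ yields $L^2 H^2\gamma_t^2 \le 1/4$ and $L H\eta\gamma_t\le 1/4$, whence the bracket is at least $1/2\ge 0$. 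Because this coefficient multiplies a nonpositive contribution to $\mathbb{E} f(w_{t+1})$, I can simply drop that entire term from the right-hand side of (\ref{iq_301}).

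After discarding it, Lemma \ref{lem1} reduces to
\begin{equation*}
\mathbb{E}_{\xi,k}f(w_{t+1}) \le f(w_t) - \tfrac{MH\eta\gamma_t}{2K}\|\nabla f(w_t)\|_2^2 + \tfrac{ML\eta H\gamma_t^2}{2K}\bigl(\eta + LH\gamma_t\bigr)\sigma^2.
\end{equation*}
The noise coefficient is bounded cleanly by invoking $\gamma_t\le 1/(2LH)$, which gives $LH\gamma_t\le 1/2$ and hence $\eta + LH\gamma_t \le \eta + 1/2 = (2\eta+1)/2$. This is exactly where the constant $(2\eta+1)/2$ in the theorem statement comes from; no other algebraic manipulation is needed on the noise side.

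Next I would rearrange the recursion to isolate $\frac{MH\eta\gamma_t}{2K}\|\nabla f(w_t)\|_2^2$ on the left, sum over $t=0,\dots,T-1$, and use a telescoping argument together with the lower bound $f_{\inf}$ to collapse the function-value differences into $f(w_0)-f_{\inf}$. This yields
\begin{equation*}
\tfrac{MH\eta}{2K}\sum_{t=0}^{T-1}\gamma_t\,\mathbb{E}\|\nabla f(w_t)\|_2^2 \le f(w_0)-f_{\inf} + \tfrac{ML\eta H(2\eta+1)\sigma^2}{4K}\sum_{t=0}^{T-1}\gamma_t^2.
\end{equation*}
The last step is the standard trick $\min_t\mathbb{E}\|\nabla f(w_t)\|_2^2 \le \frac{1}{\Gamma_T}\sum_t\gamma_t\,\mathbb{E}\|\nabla f(w_t)\|_2^2$, followed by dividing both sides by $\frac{MH\eta\Gamma_T}{2K}$ to recover the stated bound.

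Honestly this proof is mostly bookkeeping given Lemma \ref{lem1}; the only subtle point is matching the constants. The ``hard part'' is really just verifying that the two-part stepsize cap $\gamma_t\le\min\{1/(2LH),1/(4\eta LH)\}$ does exactly two things at once—makes the dropped term nonpositive, and bounds $LH\gamma_t\le 1/2$ tightly enough to yield $(2\eta+1)/2$ rather than a looser constant. Once those two uses of the stepsize condition are lined up, the telescoping and $\min$-versus-weighted-average step proceed without complication.
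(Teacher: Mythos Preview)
Your proposal is correct and follows essentially the same route as the paper's proof: start from Lemma~\ref{lem1}, use the stepsize cap to make the coefficient of $\sum_{h,k}\|\nabla f_k(w_{t,h}^k)\|_2^2$ nonnegative so that term can be dropped, bound the noise coefficient via $LH\gamma_t\le 1/2$ to obtain the factor $(2\eta+1)/2$, then take full expectation, telescope, and apply the weighted-average-to-minimum inequality. The only cosmetic difference is that the paper writes the intermediate noise constant as $\frac{MLH\gamma_t^2}{4K}(2\eta^2+\eta)$ before dividing through by $\frac{MH\eta}{2K}$, whereas you factor it as $\frac{ML\eta H\gamma_t^2}{2K}(\eta+LH\gamma_t)$ first; the arithmetic is identical.
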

\begin{proof}
	From Lemma \ref{lem1}, if we let $\gamma_t \leq \min \left\{ \frac{1}{2LH_t}, \frac{1}{4\eta LH_t} \right\}$, we have:
	\begin{eqnarray}
	\frac{M \eta \gamma_t }{2K}  -  \frac{ML^2 \eta H_t^2\gamma_t^3}{2K} - \frac{M L  H_t \eta^2 \gamma_t^2}{2K} &> & 0. \nonumber
	\end{eqnarray}
	Taking expectation of inequality (\ref{iq_301}), it holds that:
	\begin{eqnarray}
	\mathbb{E}_{\xi, k} f(w_{t+1}) &\leq & \mathbb{E}_{\xi, k}  f(w_t ) -  \frac{M H_t \eta \gamma_t}{2K} \mathbb{E}_{\xi, k} \|  \nabla f(w_t) \|_2^2 \nonumber \\
	&&  + \frac{M L H_t  \gamma_t^2\sigma^2}{4K} \left( 2 \eta^2+ \eta    \right). \nonumber
	\end{eqnarray}
	Supposing $H_t = H$, rearranging the inequality above, and summing it up from $t=0$ to $T-1$, we have:
	{
		\begin{eqnarray}
		\sum\limits_{t=0}^{T-1} { \gamma_t }  \mathbb{E}_{\xi, k}  \|  \nabla f(w_t) \|_2^2 &\leq&  \frac{2K \left(f(w_0) - f(w_T)  \right)}{\eta MH} \nonumber \\
		&&+  \frac{L  \sigma^2 \left( 2 \eta+ 1    \right)  }{2}   \sum\limits_{t=0}^{T-1}  \gamma_t^2.\nonumber
		\end{eqnarray}}
	Following \cite{bottou2016optimization}, let $\Gamma_{T} = \sum\limits_{t=0}^{T-1} {\gamma_t } $ and $f_{\inf }\leq f(w_T)$, we have:
	{
		\begin{eqnarray}
		\min\limits_{t \in \{0,...,T-1\}}    \|  \nabla f(w_t) \|_2^2  &\leq &  \frac{2K \left(f(w_0) - f(w_{\inf})  \right)}{ \eta MH \Gamma_{T}}  \nonumber \\
		&&+    \frac{L  \sigma^2 \left( 2 \eta+ 1    \right)  }{2} \frac{\sum\limits_{t=0}^{T-1}   \gamma_t^2}{\Gamma_{T}}.  \nonumber
		\end{eqnarray}}

	We complete the proof.
\end{proof}

\begin{corollary}
	Following Theorem \ref{thm1}, we can prove that Algorithm \ref{alg_server} is guaranteed  to converge to
	critical points for the non-convex problem $ \lim\limits_{T\rightarrow \infty}\min\limits_{t \in \{0,...,T-1\}}    \|  \nabla f(w_t) \|_2^2 = 0$, as long as the decreasing $\gamma_t$ satisfies:
	\begin{eqnarray}
		\lim_{T\rightarrow \infty}	\sum\limits_{t=0}^{T-1}   \gamma_t = \infty \hspace{0.7cm} and  \hspace{0.7cm}\lim_{T\rightarrow \infty}	\sum\limits_{t=0}^{T-1}   \gamma_t^2 < \infty.
		\label{iq_30001}
	\end{eqnarray}
	\label{thm_11}
\end{corollary}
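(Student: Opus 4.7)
The plan is to derive the corollary as a direct consequence of Theorem \ref{thm1}, by taking $T \to \infty$ in its bound and showing that both terms on the right-hand side vanish under condition (\ref{iq_30001}). Since the stepsize condition $\gamma_t \leq \min\{1/(2LH_t), 1/(4\eta L H_t)\}$ from Theorem \ref{thm1} is compatible with any decreasing sequence (for sufficiently large $t$), and the two conditions $\sum \gamma_t = \infty$ and $\sum \gamma_t^2 < \infty$ match the Robbins--Monro setting, no additional machinery is required.

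Concretely, I would start by recalling the bound from Theorem \ref{thm1}:
\begin{eqnarray}
\min_{t \in \{0,\ldots,T-1\}} \mathbb{E}_{\xi,k} \|\nabla f(w_t)\|_2^2 &\leq& \underbrace{\frac{2K(f(w_0) - f_{\inf})}{\eta M H \Gamma_T}}_{A_T} \nonumber \\
&&+ \underbrace{\frac{L\sigma^2(2\eta+1)}{2} \cdot \frac{\sum_{t=0}^{T-1}\gamma_t^2}{\Gamma_T}}_{B_T}. \nonumber
\end{eqnarray}
Then I would analyze the two terms separately. For $A_T$: the numerator is a fixed constant (independent of $T$), while the denominator $\Gamma_T = \sum_{t=0}^{T-1}\gamma_t \to \infty$ by the first condition in (\ref{iq_30001}), so $A_T \to 0$. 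For $B_T$: the numerator $\sum_{t=0}^{T-1}\gamma_t^2$ converges to a finite limit by the second condition in (\ref{iq_30001}), while again $\Gamma_T \to \infty$, so $B_T \to 0$ as well.

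Combining the two, the right-hand side vanishes in the limit, yielding
\begin{eqnarray}
\lim_{T \to \infty} \min_{t \in \{0,\ldots,T-1\}} \mathbb{E}_{\xi,k} \|\nabla f(w_t)\|_2^2 = 0, \nonumber
\end{eqnarray}
which is the claim. There is essentially no obstacle: this is the standard Robbins--Monro-style passage to the limit, and the only subtlety is to note that the stepsize upper bound in Theorem \ref{thm1} does not conflict with the conditions in (\ref{iq_30001})---one can, for example, use $\gamma_t = \Theta(1/t^a)$ with $a \in (1/2, 1]$ (with $\gamma_0$ clipped to satisfy the Theorem \ref{thm1} constraint), which is the canonical diminishing-stepsize schedule satisfying both requirements. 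The whole argument should occupy only a few lines.
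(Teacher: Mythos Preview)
Your proposal is correct and matches the paper's approach: the paper does not give a detailed proof of this corollary but simply remarks that the result follows from Theorem \ref{thm1} via the classical Robbins--Monro conditions, citing \cite{robbins1951stochastic}. Your explicit splitting into $A_T$ and $B_T$ and passage to the limit is exactly the standard argument the paper is invoking, so there is nothing to add.
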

The results above follow from the seminal work in \cite{robbins1951stochastic} and the two requirements in (\ref{iq_30001}) can be easily satisfied if we let $\gamma_t= \frac{1}{t+1}$. We can also obtain the convergence rate of FedAvg if we let $\gamma_t$ be a constant.

\begin{corollary}
	Following Theorem \ref{thm1}, we suppose stepsize $\gamma_t=\gamma$ for all $t \in \{0,...,T-1 \}$ and $\eta = 1$. If   $\gamma \leq \min \left\{ \sqrt{ \frac{4K(f(w_0)-f_{\inf}) }{3MTHL\sigma^2} }, \frac{1}{4LH}  \right\} $, it is  guaranteed that Algorithm \ref{alg_server} converges as follows:
	\begin{eqnarray}
		\min\limits_{t \in \{0,...,T-1\}}    \|  \nabla f(w_t) \|_2^2  &\leq &  \frac{8LK (f(w_0) - f_{\inf}) }{MT}   \nonumber \\
		&&+   \sqrt{ \frac{12KL\sigma^2(f(w_0)-f_{\inf}) }{MTH} }. \nonumber
	\end{eqnarray}
	\label{thm_12}
\end{corollary}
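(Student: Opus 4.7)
My plan is to specialize Theorem~\ref{thm1} to $\eta=1$, $H_t=H$, and a constant stepsize $\gamma_t=\gamma$, which collapses its right-hand side to $\frac{2K(f(w_0)-f_{\inf})}{MHT\gamma}+\frac{3L\sigma^{2}\gamma}{2}$. I then pick $\gamma$ to minimize this expression subject to the cap $\gamma\le \frac{1}{4LH}$ already required in Theorem~\ref{thm1}. The unconstrained balance point of the bias and variance terms is $\gamma^{\star}=\sqrt{\frac{4K(f(w_0)-f_{\inf})}{3MHTL\sigma^{2}}}$, which is exactly the first entry inside the $\min$ in the stepsize choice of the statement.

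I would split on whether the cap is active. In the regime $\gamma^{\star}\le \frac{1}{4LH}$ I set $\gamma=\gamma^{\star}$; direct substitution makes the two terms coincide, so their sum collapses to $\sqrt{\frac{12KL\sigma^{2}(f(w_0)-f_{\inf})}{MHT}}$, matching the square-root summand of the claim. In the regime $\gamma^{\star}> \frac{1}{4LH}$ I set $\gamma=\frac{1}{4LH}$; the first term then becomes exactly $\frac{8LK(f(w_0)-f_{\inf})}{MT}$, matching the first summand of the claim, while the second term becomes $\frac{3\sigma^{2}}{8H}$.

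The only step requiring care is showing that in this second case the residual noise term $\frac{3\sigma^{2}}{8H}$ is dominated by $\sqrt{\frac{12KL\sigma^{2}(f(w_0)-f_{\inf})}{MHT}}$, so that it can be absorbed into the square-root summand. Here I would reuse the case assumption $\frac{1}{4LH}<\gamma^{\star}$: squaring yields $\sigma^{2}<\frac{64KLH(f(w_0)-f_{\inf})}{3MT}$, and plugging this back into $\tfrac{3\sigma^{2}}{8H}$ and comparing squares against $\tfrac{12KL\sigma^{2}(f(w_0)-f_{\inf})}{MHT}$ shows the required domination with room to spare. Combining the two cases delivers the stated inequality; I foresee no substantive obstacle beyond careful bookkeeping of the multiplicative constants.
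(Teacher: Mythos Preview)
Your proposal is correct and is exactly the standard specialization-and-balancing argument one expects here; the paper itself states this corollary without an explicit proof, but your derivation is the natural way to obtain it from Theorem~\ref{thm1}. Your case split and the check that $\tfrac{3\sigma^{2}}{8H}$ is dominated by the square-root term in the capped regime are both fine, so there is nothing to add.
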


We have proven that FedAvg is guaranteed to converge to critical points for non-convex problems at $O\left(\sqrt{\frac{1}{T}}\right)$.

This is the first work which confirms the convergence of FedAvg for non-convex problems. It is worthwhile to highlight the generalities of our analysis compared to \cite{sahu2018convergence} as follows:
$i$) no data distribution assumption, so that it is satisfied for clients with any data distributions;
$ii$) no constraints on local subproblem, so subproblems can also be non-convex.

\section{New Federated Momentum Algorithm}
\label{sec::fedmom}
By understanding model averaging as a gradient-based method with biased gradients, we propose a novel accelerated federated momentum algorithm on the server end. We also prove that the proposed method is guaranteed to converge to critical points for non-convex problems.

\subsection{Algorithm Description}
The origin of momentum methods dates back to the 1960's \cite{polyak1964some}. Since then, it has achieved the optimal convergence rate for strongly convex smooth optimization \cite{nesterov1983method,allen2014linear}. Although admitting a similar convergence rate as SGD for non-convex problems, momentum methods exhibit impressive performance in training deep neural networks \cite{yang2016unified}. In Section \ref{sec::fedavg}, we reformulate  model averaging in Algorithm \ref{alg_server} as an update of gradient-based methods.  To accelerate the training of federated learning, we propose Federated Momentum algorithm (FedMom) by using Nesterov's accelerated gradient on the server.

We describe the procedures of FedMom in Algorithm \ref{alg_mom}. FedMom is similar to FedAvg in selecting clients and receiving updated models at each iteration. However, instead of computing the average of collected models, the server stores a momentum variable and updates the server model following steps 8-9 in Algorithm \ref{alg_mom}:
\begin{eqnarray}
	v_{t+1} &= &w_t - \eta  \sum\limits_{k=1}^K  \frac{n_k}{n }\left( w_t - w_{t+1}^k  \right), \nonumber\\
	w_{t+1} &=& v_{t+1} + \beta (v_{t+1} - v_t). \nonumber
\end{eqnarray}
In other words, FedMom performs a simple step like SGD from $w_t$ to $v_{t+1}$ at first. After that, the model moves a little bit further in the direction of the previous point $v_t$. Parameter $\beta$ is selected from $[0,1)$. In the experiment, we set $\beta=0.9$ all the time. In the following context, we provide convergence guarantees of FedMom for non-convex problems.

\begin{algorithm}[t]
	\renewcommand{\algorithmicrequire}{\textbf{Phase 2:}}
	\renewcommand{\algorithmicensure}{\textbf{Initialize:}}
	\caption{Federated Momentum (FedMom) \hfill [Server]}
	\begin{algorithmic}[1]
		\ENSURE  $v_0 = w_0$, $\eta \in \left[1, \frac{K}{M} \right]$; 
		\FOR{$t=0,1,\cdots,T-1$}
		\STATE  $S_t =$  random set of M clients;
		\FOR{each client $k \in S_t$ \textbf{in parallel}}
		\STATE Send $w_t$ to client $k$;
		\STATE  Receive $w^k_{t+1} $ from client $k$ through Algorithm \ref{alg_worker};
		\ENDFOR	
		\STATE $w_{t+1}^k = w_t$ for any $k \notin S_t$	;
		\STATE Update momentum vector $v$:
			\begin{eqnarray}
v_{t+1} &= &w_t - \eta  \sum\limits_{k=1}^K  \frac{n_k}{n }\left( w_t - w_{t+1}^k  \right) ; \nonumber
			\end{eqnarray}
		\STATE Update $w_{t+1} = v_{t+1} + \beta (v_{t+1} - v_t)  $.
		\ENDFOR	
	\end{algorithmic}
	\label{alg_mom}		
\end{algorithm}

\subsection{Convergence Analysis}
Under Assumptions \ref{ass_bd} and \ref{ass_lips}, we show that FedMom is guaranteed to converge to critical points for non-convex problems.

\begin{theorem}
	\label{thm2}
	Assume that Assumptions  \ref{ass_bd} and \ref{ass_lips} hold,  $H_t=H$ for all $t \in \{0,...,T-1 \}$. In addition, we assume that loss $f$ has a lower bound $f_{\inf}$ and let $C =  \frac{ML\eta }{4K(1-\beta)} \sigma^2 + \frac{ML\eta^2}{2K(1-\beta)^2} \sigma^2  + \frac{\beta^4 M^2L\eta^3 }{2K^2(1-\beta)^5} \sigma^2   $.  If we set $\gamma \leq \min  \left\{  \sqrt{ \frac{ f(w_0) - f_{\inf} }{THC}}, \frac{1-\beta}{4\eta HL}, \frac{(1-\beta)^2}{\eta \beta^2HL} \sqrt{\frac{K}{8M}}  \right\} $, the output of Algorithm \ref{alg_mom} satisfies that:
	\begin{eqnarray}
		\min\limits_{t \in \{0,...,T-1\}}  \mathbb{E}_{\xi, k}  \left\| \nabla f(w_t) \right\|_2^2 \leq  \frac{ 16K\eta L ( f(w_0) - f_{\inf} )  }{TM}  \nonumber \\
		+ \frac{4\eta L\beta^2  ( f(w_0) - f_{\inf} ) \sqrt{8K}   }{ (1-\beta) \sqrt{M} }   +  \frac{8K(1-\beta) }{M } \sqrt{\frac{ (f(w_0) - f_{\inf})C}{TH}}. \nonumber
	\end{eqnarray}
\end{theorem}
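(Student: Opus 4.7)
The approach I would take is to reduce FedMom to a local-SGD-type recursion via the standard Nesterov auxiliary-sequence trick, and then reuse the ingredients of Lemma~\ref{lem1}. Specifically, introduce the auxiliary sequence $z_t := v_t + \frac{\beta}{1-\beta}(v_t - v_{t-1})$ (with $v_{-1}:=v_0=w_0$, so $z_0=w_0$), and write $g_t := \eta \sum_{k=1}^K \frac{n_k}{n}(w_t - w_{t+1}^k)$ so that line~8 of Algorithm~\ref{alg_mom} becomes $v_{t+1} = w_t - g_t$. A direct substitution using $w_t = v_t + \beta(v_t - v_{t-1})$ and $v_{t+1} - v_t = \beta(v_t - v_{t-1}) - g_t$ then yields the two clean identities
\begin{align}
z_{t+1} - z_t &= -\tfrac{1}{1-\beta}\, g_t, \nonumber \\
z_t - w_t &= -\tfrac{\beta^2}{1-\beta}\, g_{t-1}. \nonumber
\end{align}
Hence FedMom is effectively a local-SGD step along the auxiliary variable $z_t$ whose stepsize is inflated by $1/(1-\beta)$, while the gradient is evaluated at a point $w_t$ that differs from $z_t$ only by a controlled multiple of the previous server update.

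From here I would apply $L$-smoothness of $f$ along $z_t$,
\begin{align}
f(z_{t+1}) &\leq f(z_t) - \tfrac{1}{1-\beta}\langle \nabla f(z_t), g_t\rangle + \tfrac{L}{2(1-\beta)^2}\|g_t\|_2^2, \nonumber
\end{align}
and decompose $\nabla f(z_t) = \nabla f(w_t) + (\nabla f(z_t) - \nabla f(w_t))$. The first piece feeds straight into the proof machinery of Lemma~\ref{lem1}: expand $g_t$ in terms of the per-client stochastic gradients, take expectation over $\xi$ and $S_t$, and recycle the bounds on $Q_1$ and $Q_2$ to obtain a $-\Theta\!\left(\frac{M\eta H\gamma}{K(1-\beta)}\right)\|\nabla f(w_t)\|_2^2$ descent term, an $O(\sigma^2)$ noise contribution, and a coefficient on $\sum_h \|\nabla f_k(w_{t,h}^k)\|_2^2$ that the first stepsize condition $\gamma \leq \frac{1-\beta}{4\eta HL}$ forces to be nonnegative. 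The second piece is handled via the Lipschitz estimate $\|\nabla f(z_t) - \nabla f(w_t)\|_2 \leq \frac{L\beta^2}{1-\beta}\|g_{t-1}\|_2$ followed by Young's inequality, which injects a term of order $\frac{L^2\beta^4}{(1-\beta)^4}\mathbb{E}\|g_{t-1}\|_2^2$ into the bound.

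The last stage is telescoping. Since $\mathbb{E}\|g_{t-1}\|_2^2$ is itself controlled by an almost-identical $Q_2$-type expansion producing an $\eta^2\gamma^2 \frac{MH}{K}\sigma^2$ noise part and an $\eta^2\gamma^2 \frac{MH}{K}\sum_h\|\nabla f_k(w_{t-1,h}^k)\|_2^2$ gradient part, a shift of the summation index lets the gradient part be absorbed into the $\sum_h\|\nabla f_k\|_2^2$ budget produced by the first piece, provided the second stepsize condition $\gamma \leq \frac{(1-\beta)^2}{\eta\beta^2 HL}\sqrt{K/(8M)}$ holds. Summing over $t=0,\dots,T-1$, using $f(z_T)\geq f_{\inf}$, and balancing the resulting $O(1/(T\gamma))$ optimization term against the $O(\gamma)$ noise term by choosing $\gamma = \sqrt{(f(w_0)-f_{\inf})/(THC)}$ then produces the stated rate, with the constant $C$ collecting exactly the three $\sigma^2$ contributions from $\frac{L}{2(1-\beta)^2}\|g_t\|_2^2$, from the cross term's $Q_2$ expansion, and from the $\|z_t-w_t\|_2^2$-induced $\|g_{t-1}\|_2^2$ noise.

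The main obstacle is bookkeeping the $(1-\beta)$ powers so that $C$ comes out exactly as stated. The smoothness step along $z_t$ already carries a $(1-\beta)^{-2}$, the $z_t - w_t$ mismatch contributes another $\beta^4/(1-\beta)^2$, and the Young's-inequality split that converts this mismatch into a usable $\|g_{t-1}\|_2^2$ term costs a further $(1-\beta)^{-1}$ that must be allocated just right to recover the $(1-\beta)^{-5}$ in the third coefficient of $C$. Matching the stated $C$ therefore pins down the Young's-inequality constants at every split, and a careless choice loses a $(1-\beta)^{-1}$; once that accounting is carried out, the remaining manipulations are mechanical and parallel the derivation of Theorem~\ref{thm1}.
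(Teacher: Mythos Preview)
Your overall strategy coincides with the paper's: introduce an auxiliary sequence $z_t$ so that $z_{t+1}-z_t=-\tfrac{1}{1-\beta}g_t$, apply $L$-smoothness along $z_t$, split $\nabla f(z_t)=\nabla f(w_t)+(\nabla f(z_t)-\nabla f(w_t))$, and reuse the $Q_1$/$Q_2$ machinery from Lemma~\ref{lem1}. Your $z_t$ is in fact exactly the paper's $z_t=w_t+p_t$.

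There is, however, a genuine error in your second ``clean identity''. From $z_t-w_t=\bigl(\tfrac{\beta}{1-\beta}-\beta\bigr)(v_t-v_{t-1})=\tfrac{\beta^2}{1-\beta}(v_t-v_{t-1})$ and the recursion $v_t-v_{t-1}=\beta(v_{t-1}-v_{t-2})-g_{t-1}$ you get
\[
z_t-w_t \;=\; -\frac{\beta^2}{1-\beta}\sum_{j=0}^{t-1}\beta^{\,j}\,g_{t-1-j},
\]
i.e.\ a full geometric combination of \emph{all} past server updates, not just $-\tfrac{\beta^2}{1-\beta}g_{t-1}$. Consequently $\|\nabla f(z_t)-\nabla f(w_t)\|_2^2\le L^2\|p_t\|_2^2$ with $p_t$ as above, and controlling $\sum_t\|p_t\|_2^2$ is where the paper does real work: it applies Jensen to the convex weights $\beta^j/\Lambda_t$, bounds each $\mathbb{E}\|g_j\|_2^2$ by a $Q_2$-type expression, and then swaps the double sum over $t$ and $j$, picking up two factors of $(1-\beta)^{-1}$ from the two geometric sums. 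Combined with the $\tfrac{\beta^4}{(1-\beta)^2}$ already present and the outer $\tfrac{1}{1-\beta}$ from the linear term, this is precisely what produces the $(1-\beta)^{-5}$ in the third coefficient of $C$.

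Your proposed ``shift of the summation index'' for a single $g_{t-1}$ would not generate enough $(1-\beta)^{-1}$ factors to match $C$, and your own power-counting paragraph lands on $(1-\beta)^{-4}$ rather than $(1-\beta)^{-5}$, which is a symptom of the missing geometric sum. Once you replace the incorrect identity by the correct one and carry out the Jensen-plus-sum-swap argument on $\sum_t\|p_t\|_2^2$, the remainder of your outline goes through exactly as in the paper.
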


\begin{proof}
	The procedures of FedMom is as follows:
	\begin{eqnarray}
	\left\{\begin{matrix}
	v_{t+1} &=& w_{t} - \eta \gamma  \sum\limits_{k \in S_t} \frac{n_k}{n} \sum\limits_{h=0}^{H-1}  \nabla f_k (w_{t, h}^k, \xi)   \\
	w_{t+1} & = & v_{t+1} + \beta( {v}_{t+1} - {v}_{t} )
	\end{matrix}\right. .
	\label{newnag}
	\end{eqnarray}
	Defining $w_{-1} = w_0$, $g_t = \eta \gamma  \sum\limits_{k \in S_t} \frac{n_k}{n} \sum\limits_{h=0}^{H-1}  \nabla f_k (w_{t, h}^k, \xi)  $ and $g_{-1} = 0$,  according to (\ref{newnag}), we have:
	\begin{eqnarray}
	w_{t+1} & = &   w_{t} - g_t  + \beta \left(w_t 	- g_t - w_{t-1} + g_{t-1}  \right). \nonumber
	\end{eqnarray}
	We also define $p_t$ as follows:
	\begin{eqnarray}
	p_t &=& \frac{\beta }{1-\beta} \left(w_t - w_{t-1} + g_{t-1} \right),\nonumber
	\end{eqnarray}
	where $p_t=0$. It also holds that:
	\begin{eqnarray}
	p_{t+1} = \beta p_t - \frac{\beta^2}{1-\beta} g_t.
	\label{pt}
	\end{eqnarray}
	Following \cite{yang2016unified}, we can prove that:
	\begin{eqnarray}
	w_{t+1} + p_{t+1} & = & \frac{1}{1-\beta} w_{t+1} - \frac{\beta}{1-\beta} w_{t} + \frac{\beta}{1-\beta} g_t \nonumber \\
	& = & \frac{1}{1-\beta} w_t - \frac{1}{1-\beta} g_t- \frac{\beta}{1-\beta} w_{t-1} + \frac{\beta}{1-\beta} g_{t-1} \nonumber \\
	&=&w_{t} + p_{t} - \frac{1}{1-\beta} g_t.
	\label{iq_001}
	\end{eqnarray}
	Let $z_t = w_t + p_t$, according to Assumption \ref{ass_lips} and taking expectation over $\xi$ and $k$, we know that:
	
	{\small
		\begin{eqnarray}
		&& \mathbb{E}_{\xi, k} [	f(z_{t+1})] \nonumber \\
		&\leq & f(z_t) -  \frac{M \eta \gamma}{K(1-\beta)} \sum\limits_{h=0}^{H-1}  \sum\limits_{k=1}^K  \frac{n_k}{n}   \mathbb{E}_{\xi, k} \left<\nabla f_k (z_t),  \nabla f_k (w_{t, h}^k)    \right> \nonumber \\
		&& + \frac{ML \eta^2\gamma^2}{2K(1-\beta)^2}  \sum\limits_{k=1}^K \frac{n_k}{n}  \mathbb{E}_{\xi, k}  \left\|  \sum\limits_{h=0}^{H-1}   \nabla f_k (w_{t, h}^k, \xi)   \right\|_2^2 \nonumber \\
		&  = &   f(z_t) -  \frac{M \eta \gamma}{K(1-\beta)} \sum\limits_{h=0}^{H-1}  \sum\limits_{k=1}^K  \frac{n_k}{n}  \mathbb{E}_{\xi, k} \left<\nabla f_k(w_t),  \nabla f_k (w_{t, h}^k)    \right>  \nonumber \\
		&& -  \frac{M \eta \gamma}{K(1-\beta)} \sum\limits_{h=0}^{H-1}  \sum\limits_{k=1}^K  \frac{n_k}{n}   \mathbb{E}_{\xi, k}\left<\nabla f_k (z_t) - \nabla f_k(w_t),  \nabla f_k (w_{t, h}^k)    \right> \nonumber \\
		&& + \frac{ML\eta^2\gamma^2}{2K(1-\beta)^2}  \sum\limits_{k=1}^K \frac{n_k}{n}  \mathbb{E}_{\xi, k}  \left\|  \sum\limits_{h=0}^{H-1}   \nabla f_k (w_{t, h}^k, \xi)   \right\|_2^2 \nonumber \\
		&  \leq  &   f(z_t) +  \frac{M \eta\gamma}{K(1-\beta)} \sum\limits_{h=0}^{H-1}  \sum\limits_{k=1}^K  \frac{n_k}{n} \biggl(  \underbrace{  \mathbb{E}_{\xi, k}\left\| \nabla f_k (z_t) - \nabla f_k(w_t)\right\|_2^2}_{Q_3}  \nonumber \\
		&&+  \frac{1}{4}  \left\|  \nabla f_k(w_t)\right\|_2^2 \biggr)   -  \frac{M \eta \gamma}{2K(1-\beta)} \sum\limits_{h=0}^{H-1}  \sum\limits_{k=1}^K  \frac{n_k}{n} \biggl(  \|\nabla f_k(w_t)\|_2^2 \nonumber \\
		&& +  \mathbb{E}_{\xi, k}\|\nabla f_k (w_{t, h}^k)\|_2^2   - \underbrace{ \mathbb{E}_{\xi, k} \|\nabla f_k(w_t) -  \nabla f_k (w_{t, h}^k)  \|_2^2 }_{Q_4}  \biggr)  \nonumber \\
		&& + \frac{ML\eta^2\gamma^2}{2K(1-\beta)^2}  \sum\limits_{k=1}^K \frac{n_k}{n}   \underbrace{  \mathbb{E}_{\xi, k}   \left\| \sum\limits_{h=0}^{H-1}  \nabla f_k (w_{t, h}^k, \xi)   \right\|_2^2 }_{Q_5},
		\label{iq_4001}
		\end{eqnarray}}
	where the inequalities follow from Cauchy's inequality and techniques in the proof of inequalities (\ref{iq_20001}) and (\ref{iq_10001}).  We readily get the upper bound of $Q_5$ from inequality (\ref{iq_601}):
	\begin{eqnarray}
	Q_5 	&  \leq   & H  \sigma^2  + H \sum\limits_{h=0}^{H-1} \mathbb{E}_{\xi, k} \left\| \nabla f_k (w_{t, h}^k)\right\|_2^2. \nonumber
	\end{eqnarray}
	The upper bound of $Q_3$ is as follows:
	\begin{eqnarray}
	Q_3 	&\leq & L^2 \mathbb{E}_{\xi, k}  \left\|  z_t-  w_{t} \right\|_2^2 \nonumber \\
	&= & L^2 \mathbb{E}_{\xi, k}  \left\|  p_t \right\|_2^2, \nonumber
	\end{eqnarray}
	where the first inequality follows from Assumption \ref{ass_lips} and the equality follows from $w_t= w_{t, 0}^k$. Because of (\ref{pt}) and $p_0=0$, it is satisfied that:
	\begin{eqnarray}
	p_t & = &  \beta p_{t-1} - \frac{\beta^2}{1-\beta} g_{t-1} \nonumber \\
	&= & - \frac{\beta^2}{1-\beta}  \sum\limits_{j=0}^{t-1} \beta^{t-1-j} g_j \nonumber \\
	&= & - \frac{\beta^2}{1-\beta}  \sum\limits_{j=0}^{t-1} \beta^{j} g_{t-1-j}. \nonumber
	\end{eqnarray}
	Let $\Lambda_{t} = \sum\limits_{j=0}^{t-1} \beta^j  $, we have:
	\begin{eqnarray}
	\mathbb{E}_{\xi, k} 	\| p_t \|_2^2 & = &  \frac{\beta^4 \Lambda_{t}^2}{(1-\beta)^2}  \mathbb{E}_{\xi, k}  \left\| \sum\limits_{j=0}^{t-1} \frac{\beta^{j}}{\Lambda_{t} }  g_{t-1-j} \right\|_2^2 \nonumber \\
	&\leq &  \frac{\beta^4 \Lambda_{t}}{(1-\beta)^2} \sum\limits_{j=0}^{t-1} {\beta^{j}} \mathbb{E}_{\xi, k}  \left\|   g_{t-1-j} \right\|_2^2,\nonumber
	\end{eqnarray}
	where the inequality is from Jensen's inequality. We can also obtain the upper bound of $\mathbb{E}_{\xi, k} \|g_t\|_2^2$ as follows:
	\begin{eqnarray}
	\mathbb{E}_{\xi, k} \|g_t\|_2^2  =     \mathbb{E}_{\xi, k} \left\|\eta\gamma  \sum\limits_{k \in S_t} \frac{n_k}{n} \sum\limits_{h=0}^{H-1}  \nabla f_k (w_{t, h}^k, \xi) \right\|_2^2 \nonumber \\
	\leq \frac{\eta^2 \gamma^2}{K}    \sum\limits_{k=1}^K  \sum\limits_{k \in S_t} \frac{n_k}{n} \mathbb{E}_{\xi, k}  \left\|   \sum\limits_{h=0}^{H-1}  \nabla f_k (w_{t, h}^k, \xi) \right\|_2^2 \nonumber \\
	\leq \frac{ M\eta^2\gamma^2H}{K}  \sigma^2     +  \frac{ M\eta^2\gamma^2H}{K}    \sum\limits_{k=1}^K  \frac{n_k}{n}  \sum\limits_{h=0}^{H-1} \mathbb{E}_{\xi, k}   \left\|    \nabla f_k (w_{t, h}^k) \right\|_2^2, \nonumber
	\end{eqnarray}
	where the first inequality follows from Jensen's inequality,  the second inequality follows from  $Q_5$ and $  \mathbb{E}_k \sum\limits_{k \in S_t} \frac{n_k}{n} \mathbb{E}_{\xi, k}  \left\|   \sum\limits_{h=0}^{H-1}  \nabla f_k (w_{t, h}^k, \xi) \right\|_2^2 =  \frac{M}{K}\sum\limits_{k=1}^K  \frac{n_k}{n}  \mathbb{E}_{\xi, k}  \left\|   \sum\limits_{h=0}^{H-1}  \nabla f_k (w_{t, h}^k, \xi) \right\|_2^2 $. 
	Therefore, we obtain that:
	\begin{eqnarray}
	\mathbb{E}_{\xi, k} \| p_t \|_2^2   \leq    \frac{\beta^4 \Lambda_{t}}{(1-\beta)^2} \cdot \frac{ M\eta^2\gamma^2H}{K}  \sigma^2   \sum\limits_{j=0}^{t-1} {\beta^{j}}  \nonumber \\
	+   \frac{\beta^4 \Lambda_{t}}{(1-\beta)^2} \cdot  \frac{ M\eta^2\gamma^2H}{K}       \sum\limits_{k=1}^K  \frac{n_k}{n}  \sum\limits_{h=0}^{H-1}  \sum\limits_{j=0}^{t-1} {\beta^{j}} 	\mathbb{E}_{\xi, k}  \left\|    \nabla f_k (w_{t-1-j, h}^k) \right\|_2^2.
	\label{iq_3001}
	\end{eqnarray}
	Because $\Lambda_{t} = \sum\limits_{j=0}^{t-1} \beta^j = \frac{1-\beta^t}{1-\beta} \leq \frac{1}{1-\beta} $ and summing inequality (\ref{iq_3001}) from $t=0$ to $T-1$, we have:
	\begin{eqnarray}
	&&\sum\limits_{t=0}^{T-1}   	\mathbb{E}_{\xi, k} \| p_t \|_2^2 \nonumber \\
	& \leq &  \frac{\beta^4 }{(1-\beta)^4} \cdot \frac{ M\eta^2\gamma^2TH}{K}  \sigma^2    \nonumber \\
	&& +    \frac{\beta^4 }{(1-\beta)^3} \cdot  \frac{ M\eta^2\gamma^2H}{K}    \sum\limits_{k=1}^K  \frac{n_k}{n}  \sum\limits_{h=0}^{H-1}   \sum\limits_{t=0}^{T-1}  \sum\limits_{j=0}^{t-1} {\beta^{j}} \left\|    \nabla f_k (w_{t-1-j, h}^k) \right\|_2^2 \nonumber\\
	& = &  \frac{\beta^4 }{(1-\beta)^4} \cdot \frac{ M\eta^2\gamma^2TH}{K}  \sigma^2  \nonumber \\
	&& +    \frac{\beta^4 }{(1-\beta)^3} \cdot  \frac{ M\eta^2\gamma^2H}{K}   \sum\limits_{k=1}^K  \frac{n_k}{n}  \sum\limits_{h=0}^{H-1}   \sum\limits_{t=0}^{T-1}  \left\|    \nabla f_k (w_{t, h}^k) \right\|_2^2  \sum\limits_{j=t}^{T-1} {\beta^{T-1-j}}  \nonumber \\
	& \leq & \frac{\beta^4 }{(1-\beta)^4} \cdot \frac{ M\eta^2\gamma^2TH}{K}  \sigma^2  \nonumber \\
	&& +    \frac{\beta^4 }{(1-\beta)^4} \cdot  \frac{ M\eta^2\gamma^2H}{K}     \sum\limits_{k=1}^K  \frac{n_k}{n}  \sum\limits_{h=0}^{H-1}   \sum\limits_{t=0}^{T-1}  \left\|    \nabla f_k (w_{t, h}^k) \right\|_2^2. \nonumber
	\end{eqnarray}
	To obtain the upper bound of $Q_4$, we have:
	\begin{eqnarray}
	Q_4 &\leq & L^2 	\mathbb{E}_{\xi,k}  \left\| \sum\limits_{j=0}^{h-1} \left( w^k_{t,j+1} - w_{t,j}^k \right) \right\|_2^2  \nonumber \\
	& = &  L^2 \gamma^2  \mathbb{E}_{\xi,k}\left\| \sum\limits_{j=0}^{h-1}\left( \nabla  f_k (w_{t,j}^k, \xi)   - \nabla  f_k (w_{t,j}^k) + \nabla  f_k (w_{t,j}^k) \right) \right\|_2^2 \nonumber \\
	& \leq & h L^2 \gamma^2 \sigma^2  + h L^2 \gamma^2 \sum\limits_{j=0}^{h-1}  \mathbb{E}_{\xi,k}\left\|  \nabla  f_k (w_{t,j}^k)  \right\|_2^2, \nonumber
	\end{eqnarray}
	where the first inequality follows from Assumption \ref{ass_lips}, the second inequality follows from  Assumption \ref{ass_bd} and  $\mathbb{E} \|z_1 + ... + z_n \|_2^2 \leq \mathbb{E} \left[   \|z_1\|_2^2 + ... + \|z_n\|_2^2\right]$	for any random variable $z_1$, ... , $z_n$ with mean $0$.
	Summing up $Q_4$ from $h=0$ to $H-1$, it holds that:
	\begin{eqnarray}
	\sum\limits_{h=0}^{H-1} Q_4 
	&\leq & H^2 L^2 \gamma^2 \sigma^2  + H^2L^2\gamma^2 \sum\limits_{h=0}^{H-1} \mathbb{E}_{\xi,k} \left\| \nabla  f_k (w_{t,h}^k)   \right\|_2^2. \nonumber
	\end{eqnarray}

	\begin{figure*}[!htbp]
		\centering
		\begin{subfigure}[b]{0.45\textwidth}
			\centering
			\includegraphics[width=3.2in]{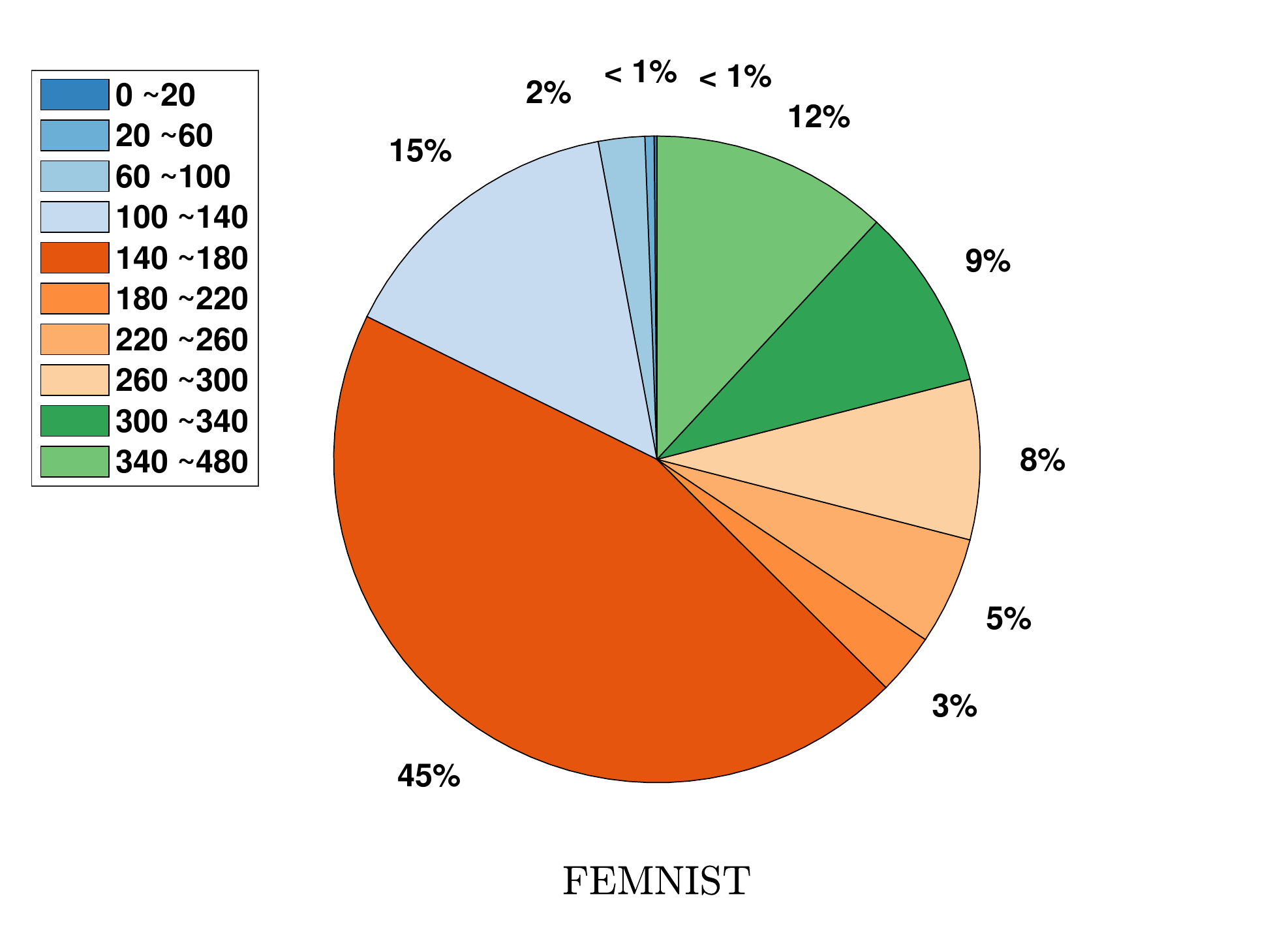}
		\end{subfigure}
		\begin{subfigure}[b]{0.45\textwidth}
			\centering
			\includegraphics[width=3.2in]{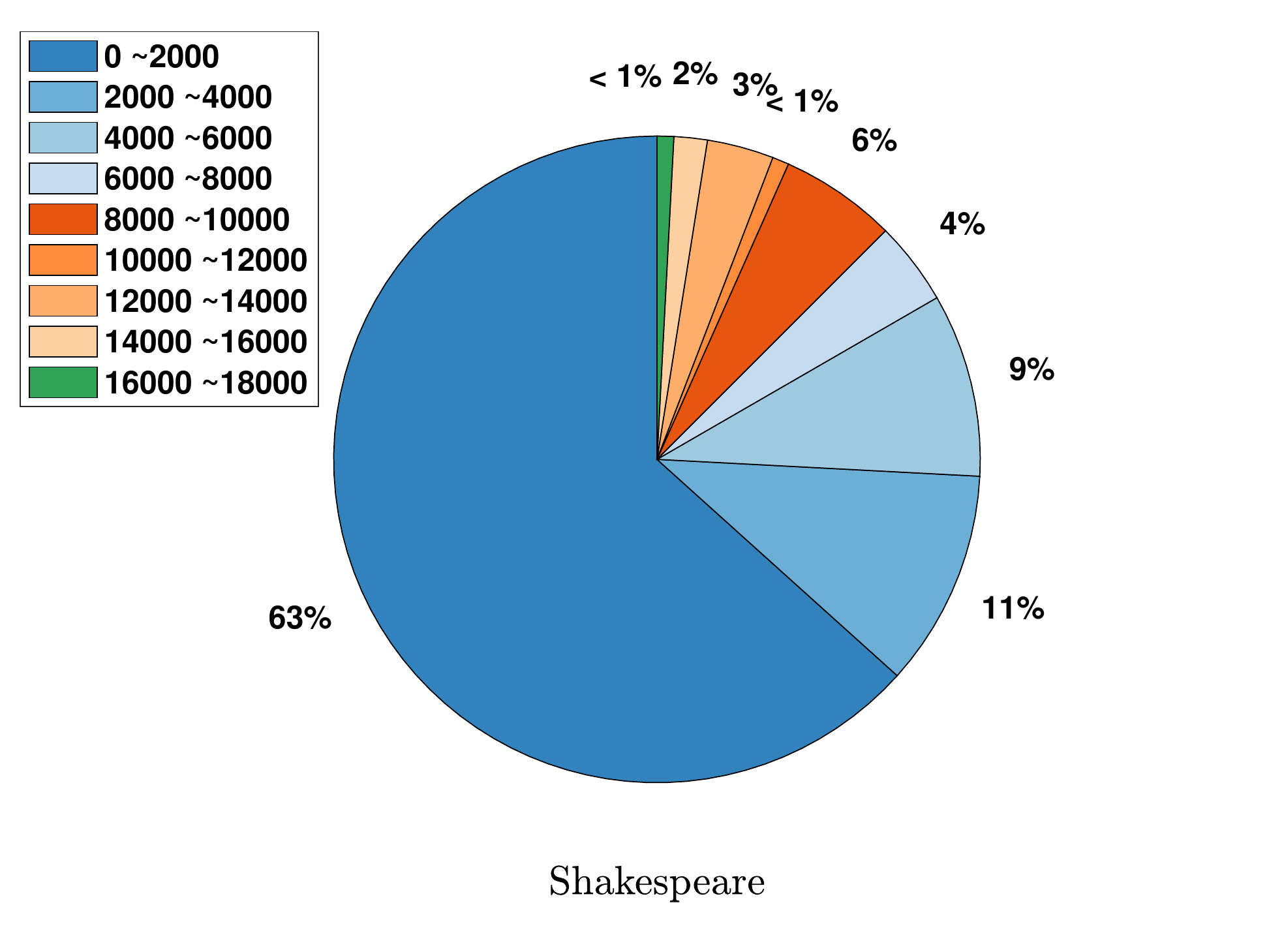}
		\end{subfigure}
		\caption{Visualization of non-IID  and unbalanced data  on all clients.  For FEMNIST dataset, the number of samples on clients is from $0$ to $480$; for Shakespeare dataset, the number of samples on clients is from $0$ to $18000$. }
		\label{fig::pie}
	\end{figure*}
	
	\begin{table*}[pt]
		\center
		\caption{Statistics of FEMNIST and Shakespeare datasets used in our experiment.}
		\setlength{\tabcolsep}{3mm}
		\begin{tabular}{cccccc}
			\hline		
			\multirow{2}{*}{ \textbf{Dataset}} 	& 	\multirow{2}{*}{\textbf{Type}}  & 	\multirow{2}{*}{\textbf{$\#$ samples}}  & 	\multirow{2}{*}{\textbf{$\#$ clients} }  & \multicolumn{2}{c}{\textbf{Statistics Per Client}} \\
			&&& & \textbf{Mean} & \textbf{Standard Deviation} \\
			\hline
			FEMNIST & Image &785,733 &	3,500 &  224.50 & 87.80 \\
			Shakespeare&   Text & 517,106 & 125&  4136.85 & 7226.20\\
			\hline		
		\end{tabular}
		\label{table:data}
	\end{table*}

	Inputting $Q_3$, $Q_4$ and $Q_5$ into inequality (\ref{iq_4001}) and summing it up from $t=0$ to $T-1$, we have:
	\begin{eqnarray}
	&&\mathbb{E}_{\xi, k} [	f(z_{T})] \nonumber \\
	& \leq & f(z_0) - \frac{MH\eta \gamma}{4K(1-\beta)}  \sum\limits_{t=0}^{T-1} \sum\limits_{k=1}^K \frac{n_k}{n}  \mathbb{E}_{\xi, k}  \left\| \nabla f_k(w_t) \right\|_2^2 \nonumber \\
	&& + \frac{MH\eta \gamma}{K(1-\beta)}   \sum\limits_{k=1}^K \frac{n_k}{n} \sum\limits_{t=0}^{T-1}   Q_3 \nonumber \\
	&& + \frac{M\eta \gamma}{2K(1-\beta)}  \sum\limits_{t=0}^{T-1}  \sum\limits_{k=1}^K \frac{n_k}{n}   \sum\limits_{h=0}^{H-1} Q_4 \nonumber \\
	&& - \frac{M\eta\gamma}{2K(1-\beta)}  \sum\limits_{t=0}^{T-1} \sum\limits_{h=0}^{H-1}   \sum\limits_{k=1}^K \frac{n_k}{n}    \mathbb{E}_{\xi, k}\left\| \nabla  f_k (w_{t,h}^k)   \right\|_2^2 \nonumber \\
	&&+ \frac{ML\eta^2\gamma^2}{2K(1-\beta)^2}   \sum\limits_{t=0}^{T-1}   \sum\limits_{k=1}^K \frac{n_k}{n}Q_5 \nonumber \\
	& \leq & f(z_0) - \frac{MH\gamma}{4K(1-\beta)}  \sum\limits_{t=0}^{T-1}\mathbb{E}_{\xi, k}   \left\| \nabla f(w_t) \right\|_2^2 \nonumber \\
	&&- \bigg(  \frac{M\eta\gamma}{2K(1-\beta)} - \frac{ML^2H^2\eta\gamma^3}{2K(1-\beta)} - \frac{MLH\eta^2\gamma^2}{2K(1-\beta)^2}  \nonumber \\
	&& -  \frac{\beta^4M^2L^2H^2\eta^3\gamma^3}{K^2(1-\beta)^5}  \bigg)  \sum\limits_{t=0}^{T-1} \sum\limits_{h=0}^{H-1}   \sum\limits_{k=1}^K \frac{n_k}{n}  \mathbb{E}_{\xi, k}  \left\| \nabla  f_k (w_{t,h}^k)   \right\|_2^2 \nonumber \\
	&& + \frac{TML^2H^2\eta\gamma^3}{2K(1-\beta)} \sigma^2 + \frac{TMLH\eta^2\gamma^2}{2K(1-\beta)^2} \sigma^2  + \frac{\beta^4 TM^2L^2H^2\eta^3 \gamma^3 }{K^2(1-\beta)^5}  \sigma^2. \nonumber
	\end{eqnarray}
	As long as the following inequalities are satisfied:
	\begin{eqnarray}
	L^2H^2\gamma^2 &\leq & \frac{1}{4}, \nonumber \\
	\frac{LH\eta\gamma}{1-\beta} &\leq & \frac{1}{4}, \nonumber\\
	\frac{\beta^4 ML^2H^2\eta^2 \gamma^2}{K(1-\beta)^4} &\leq  & \frac{1}{8},\nonumber
	\end{eqnarray}
	we have:
	\begin{eqnarray}
	\frac{M\eta\gamma}{2K(1-\beta)} - \frac{ML^2H^2\eta\gamma^3}{2K(1-\beta)} - \frac{MLH\eta^2\gamma^2}{2K(1-\beta)^2}   -  \frac{\beta^4M^2L^2H^2\eta^3\gamma^3}{K^2(1-\beta)^5}    >  0.\nonumber
	\end{eqnarray}
	Thus, it follows that:
	\begin{eqnarray}
	\mathbb{E}_{\xi, k} [	f(z_{T})]
	\leq  f(z_0) - \frac{MH\gamma}{4K(1-\beta)}  \sum\limits_{t=0}^{T-1} \mathbb{E}_{\xi, k}  \left\| \nabla f(w_t) \right\|_2^2 \nonumber \\
	+ \frac{TMLH\eta \gamma^2}{4K(1-\beta)} \sigma^2 + \frac{TMLH\eta^2\gamma^2}{2K(1-\beta)^2} \sigma^2  + \frac{\beta^4 TM^2LH\eta^3\gamma^2 }{2K^2(1-\beta)^5}  \sigma^2.
	\label{iq_5001}
	\end{eqnarray}
	Rearranging inequality (\ref{iq_5001}) and letting $C =  \frac{ML\eta }{4K(1-\beta)} \sigma^2 + \frac{ML\eta^2}{2K(1-\beta)^2} \sigma^2  + \frac{\beta^4 M^2L\eta^3 }{2K^2(1-\beta)^5} \sigma^2   $, we obtain that:
	\begin{eqnarray}
	\min\limits_{t \in \{0,...,T-1\}}  \mathbb{E}_{\xi, k}  \left\| \nabla f(w_t) \right\|_2^2 & \leq&  \frac{4K(1-\beta) ( f(w_0) - f_{\inf} )}{THM\gamma } \nonumber \\
	&&+  \frac{4KC(1-\beta)\gamma }{M }.\nonumber
	\end{eqnarray}
	After inputting the upper bound of  $\gamma$  in the above inequality,	we complete the proof.
	
\end{proof}

Above we also prove that FedMom is guaranteed to converge to critical points for non-convex problems at $O\left(\sqrt{\frac{1}{T}}\right)$. Although FedMom shares the similar convergence rate to FedAvg, we will show in the following context that FedMom works better than FedAvg empirically.

\begin{figure*}[pt]
	\centering
	\begin{subfigure}[b]{0.42\textwidth}
		\centering
		\includegraphics[width=3in]{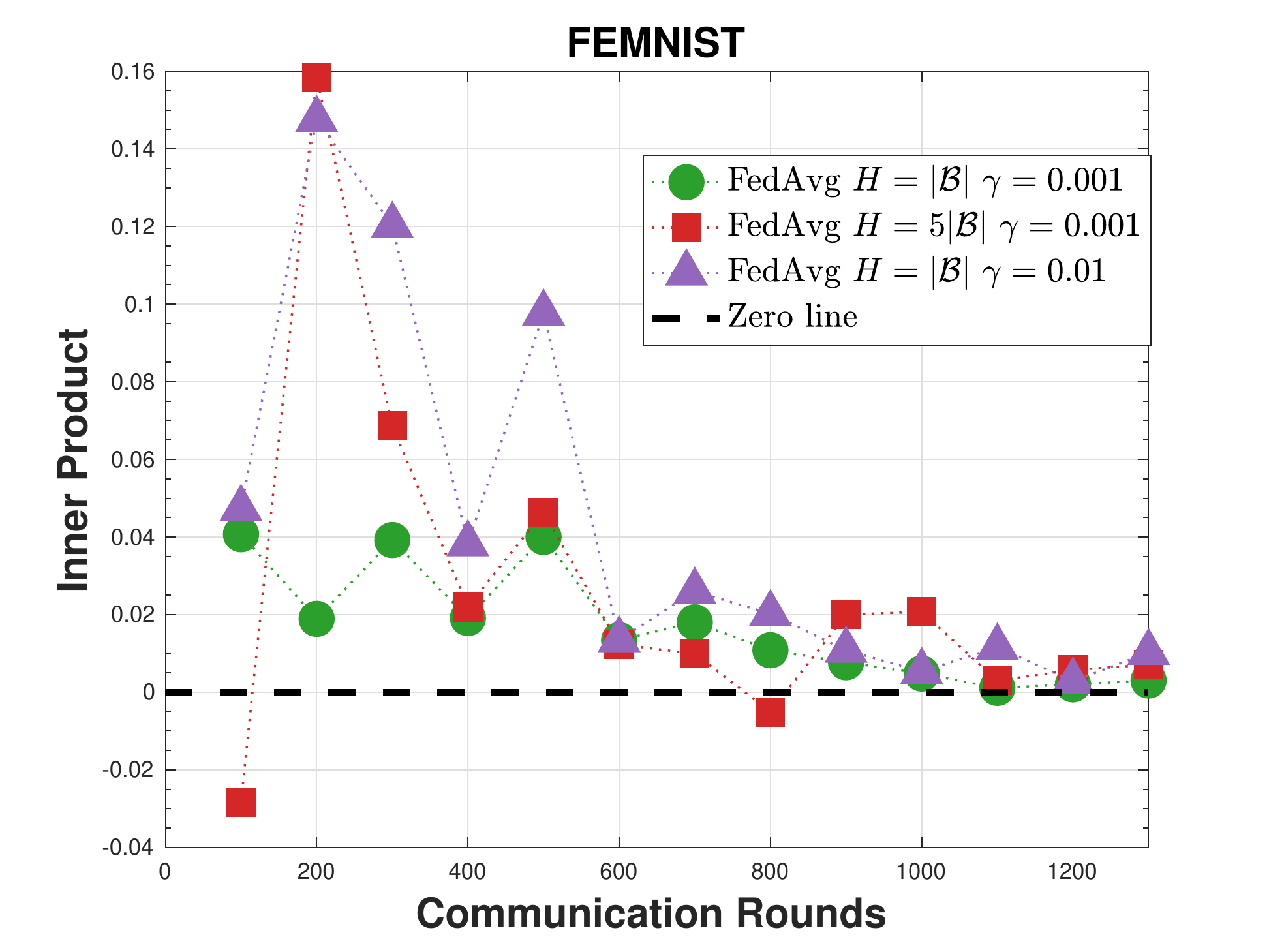}
	\end{subfigure}
	\begin{subfigure}[b]{0.42\textwidth}
		\centering
		\includegraphics[width=3in]{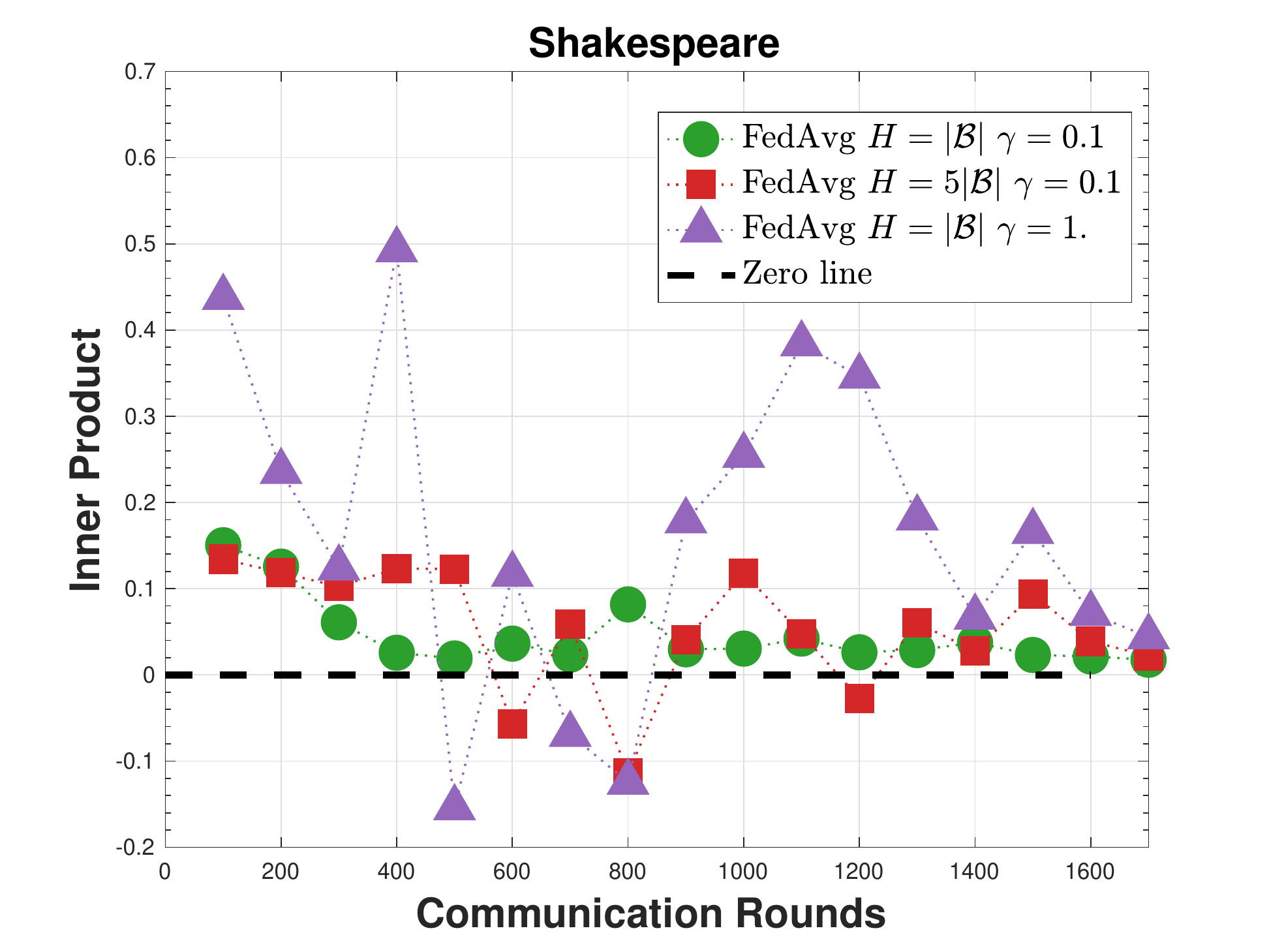}
	\end{subfigure}
	\caption{Variation of  the expectation of Inner product $ \mathbb{E}_{S_t}\left<g_t, w_t - w^*  \right> $ in the course of optimization. We set the model after $2000$ communication rounds as $w^*$.  }
	\label{fig::sigma_femnist}
\end{figure*}

\begin{figure*}[!h]
	\centering
	\begin{subfigure}[b]{0.328\textwidth}
		\centering
		\includegraphics[width=2.5in]{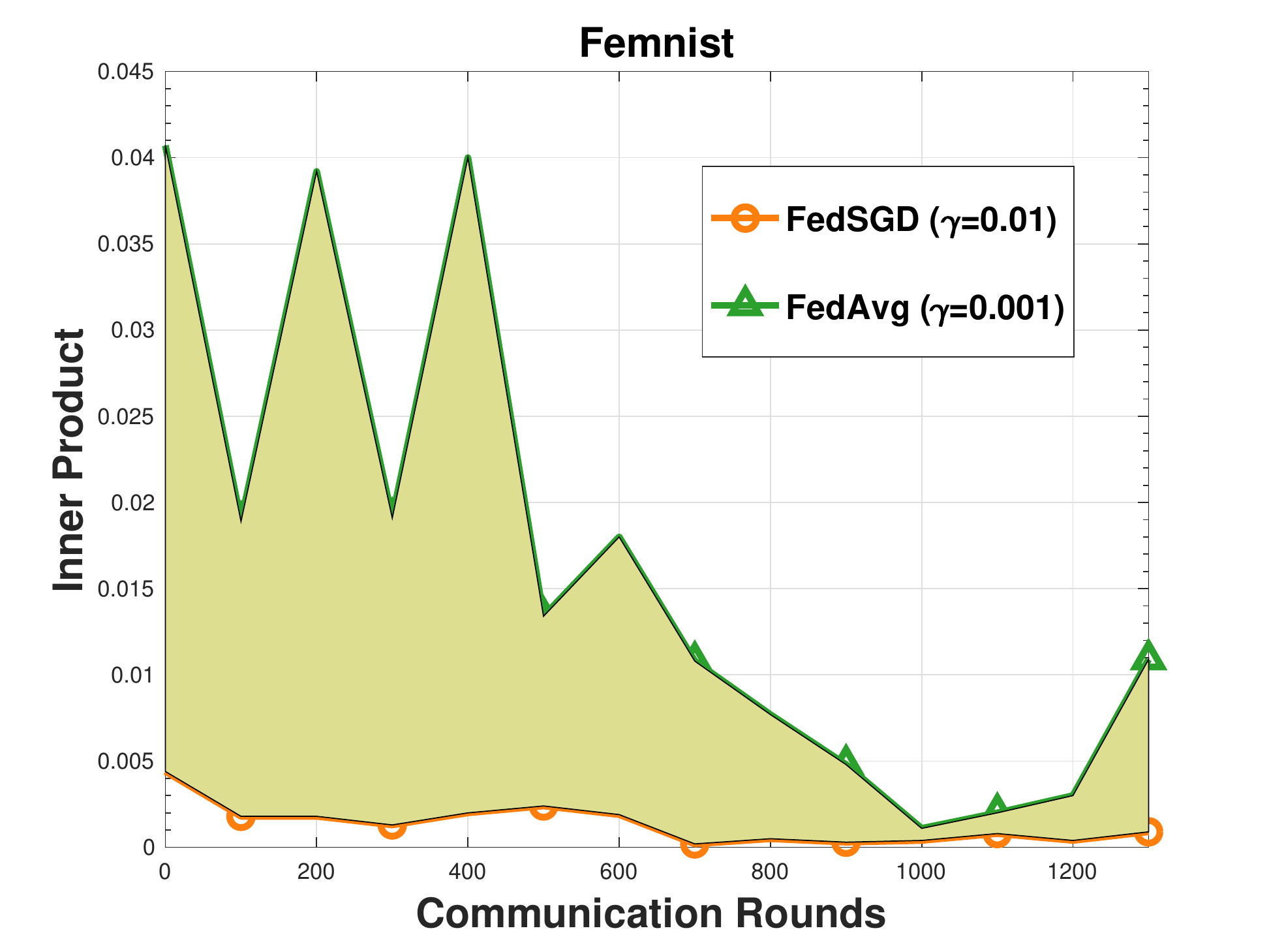}
	\end{subfigure}
	\begin{subfigure}[b]{0.328\textwidth}
		\centering
		\includegraphics[width=2.5in]{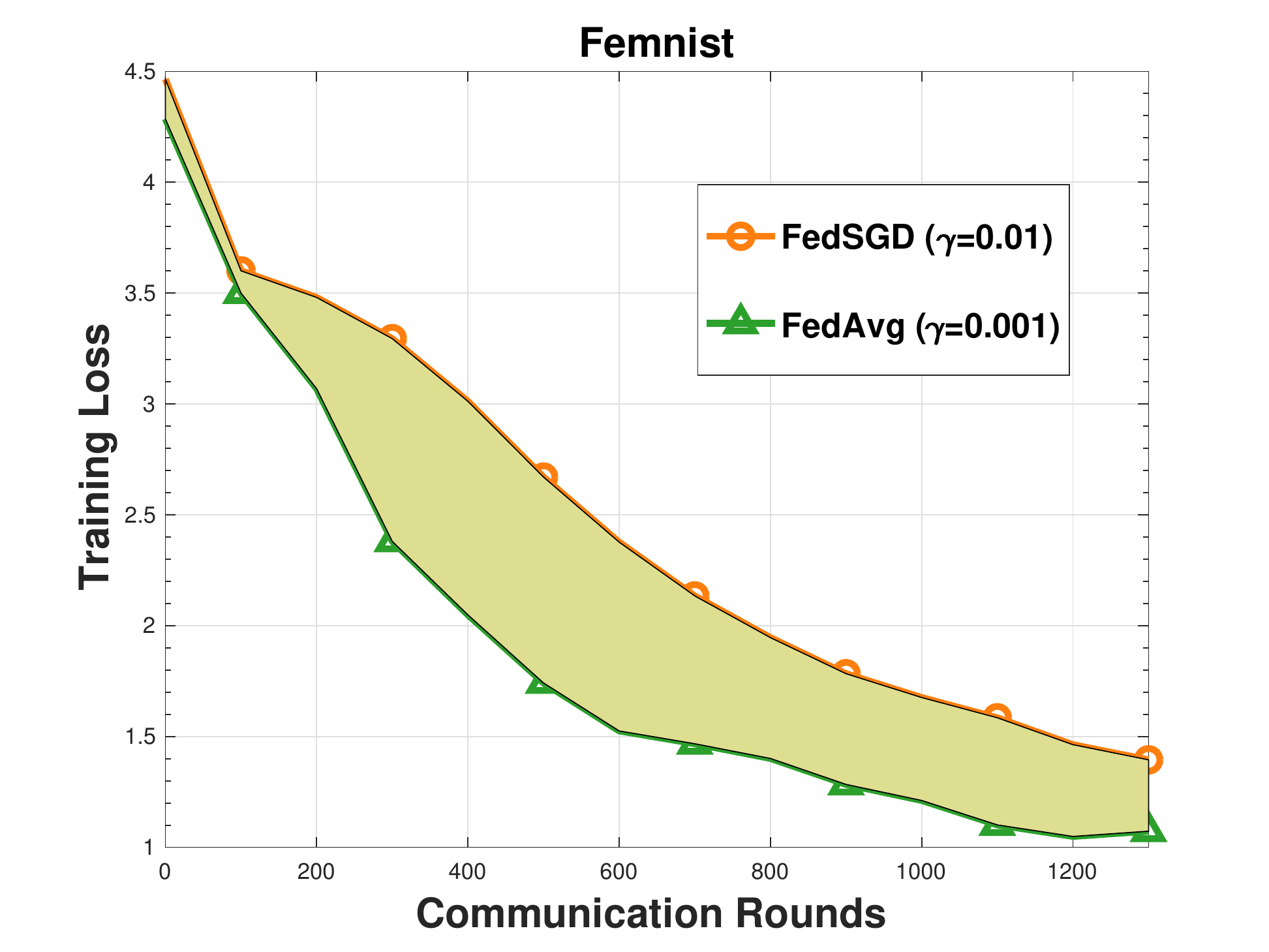}
	\end{subfigure}
	\begin{subfigure}[b]{0.328\textwidth}
		\centering
		\includegraphics[width=2.5in]{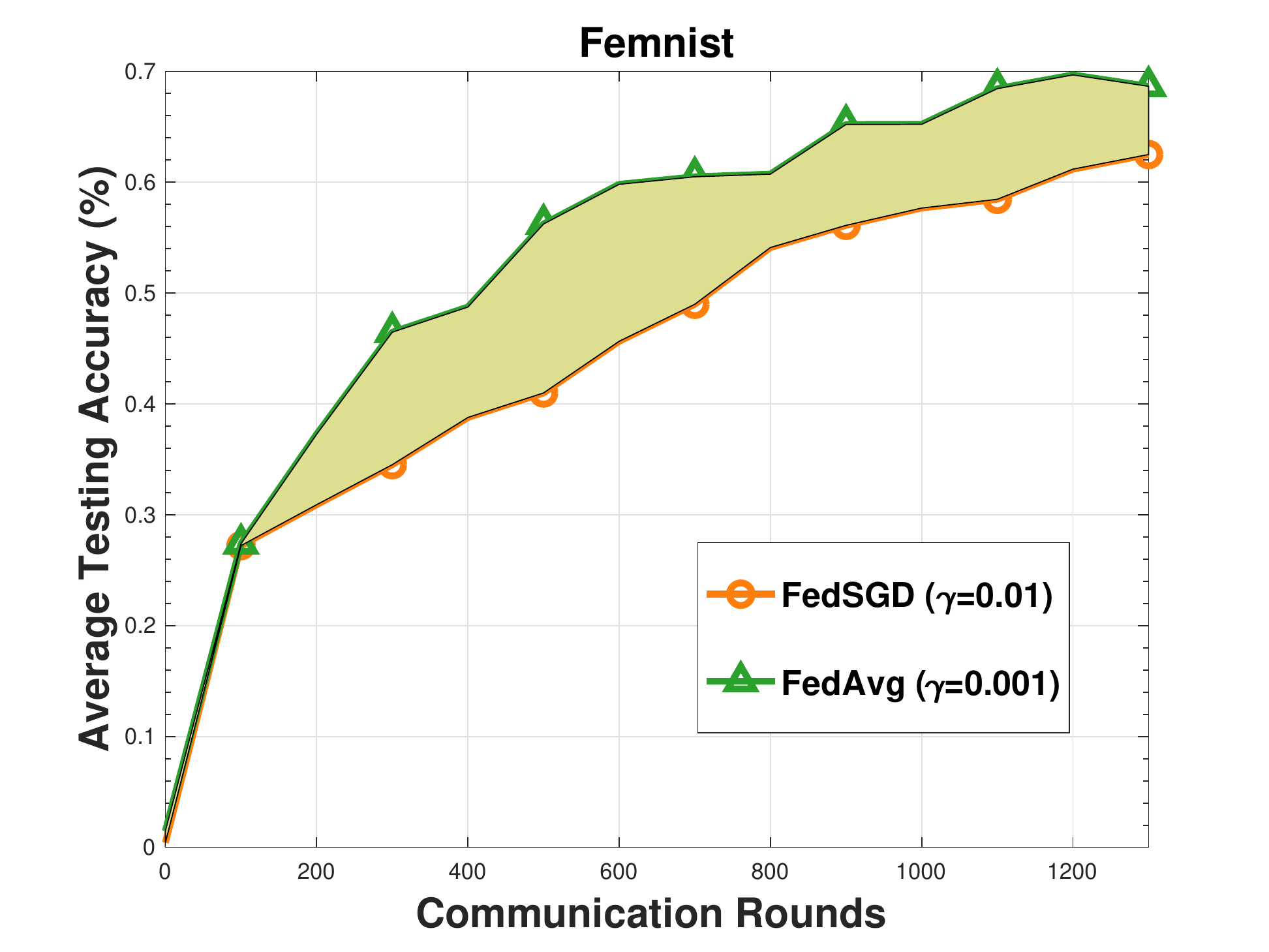}
	\end{subfigure}
	\caption{Verifying why FedAvg converges faster than FedSGD. The shaded region denotes the gap of performance between two methods. ``Inner Product'' represents $ \mathbb{E}_{S_t}\left<g_t, w_t - w^*  \right> $.}
	\label{fig::fedavg_sgd}
\end{figure*}

\begin{figure*}[t]
	\centering
	\begin{subfigure}[b]{0.328\textwidth}
		\centering
		\includegraphics[width=2.5in]{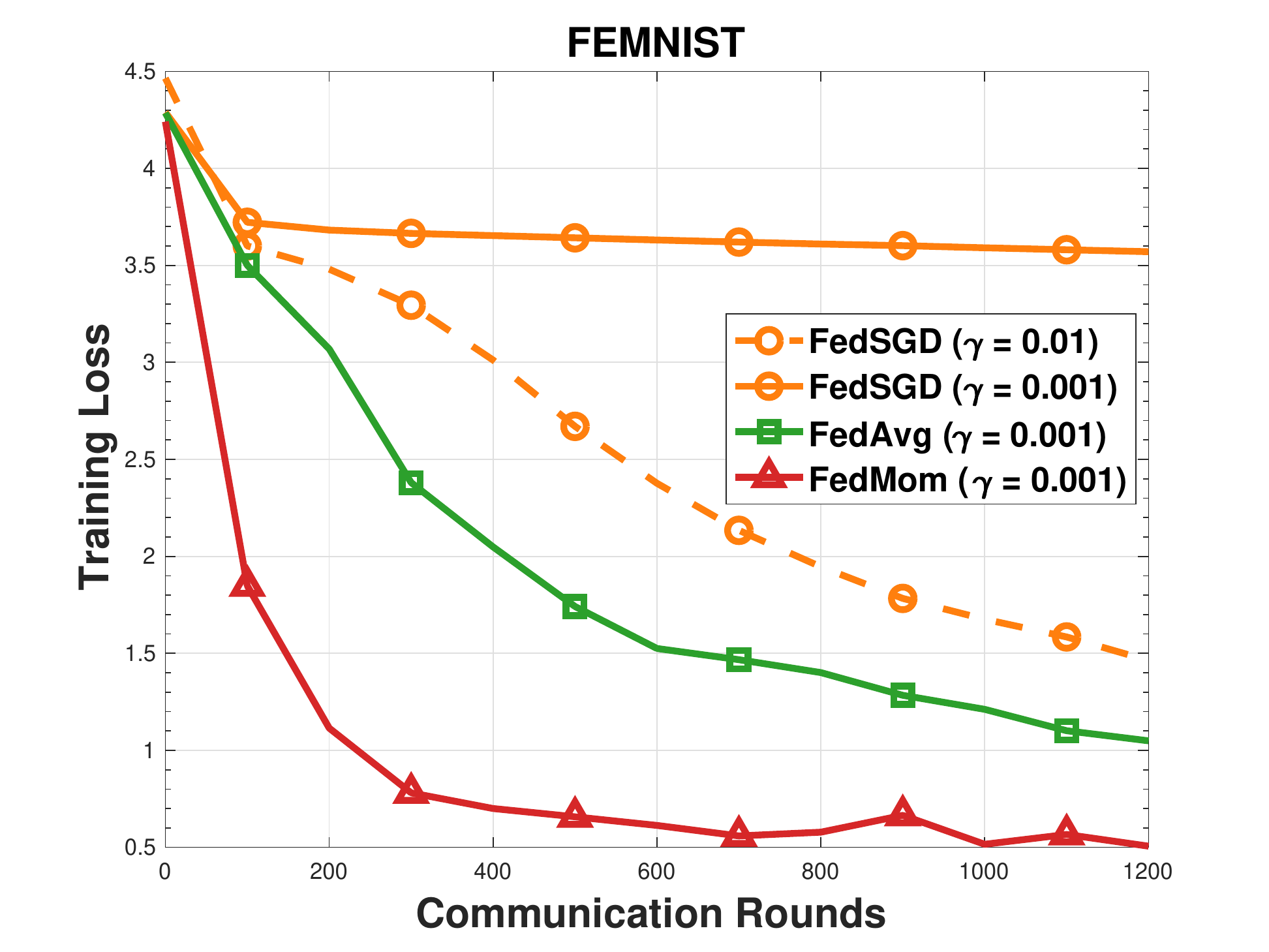}
	\end{subfigure}
	\begin{subfigure}[b]{0.328\textwidth}
		\centering
		\includegraphics[width=2.5in]{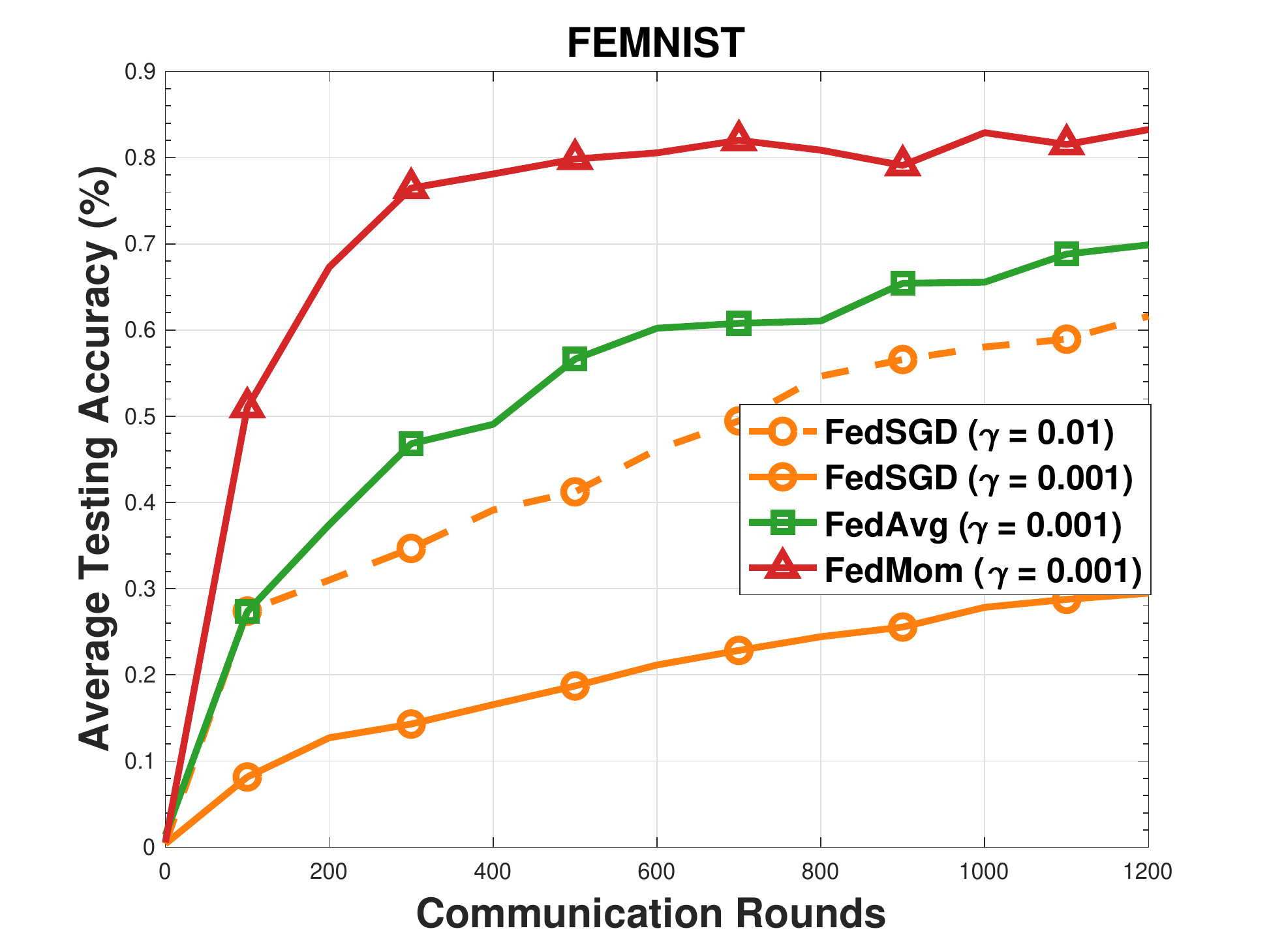}
	\end{subfigure}
	\begin{subfigure}[b]{0.328\textwidth}
		\centering
		\includegraphics[width=2.5in]{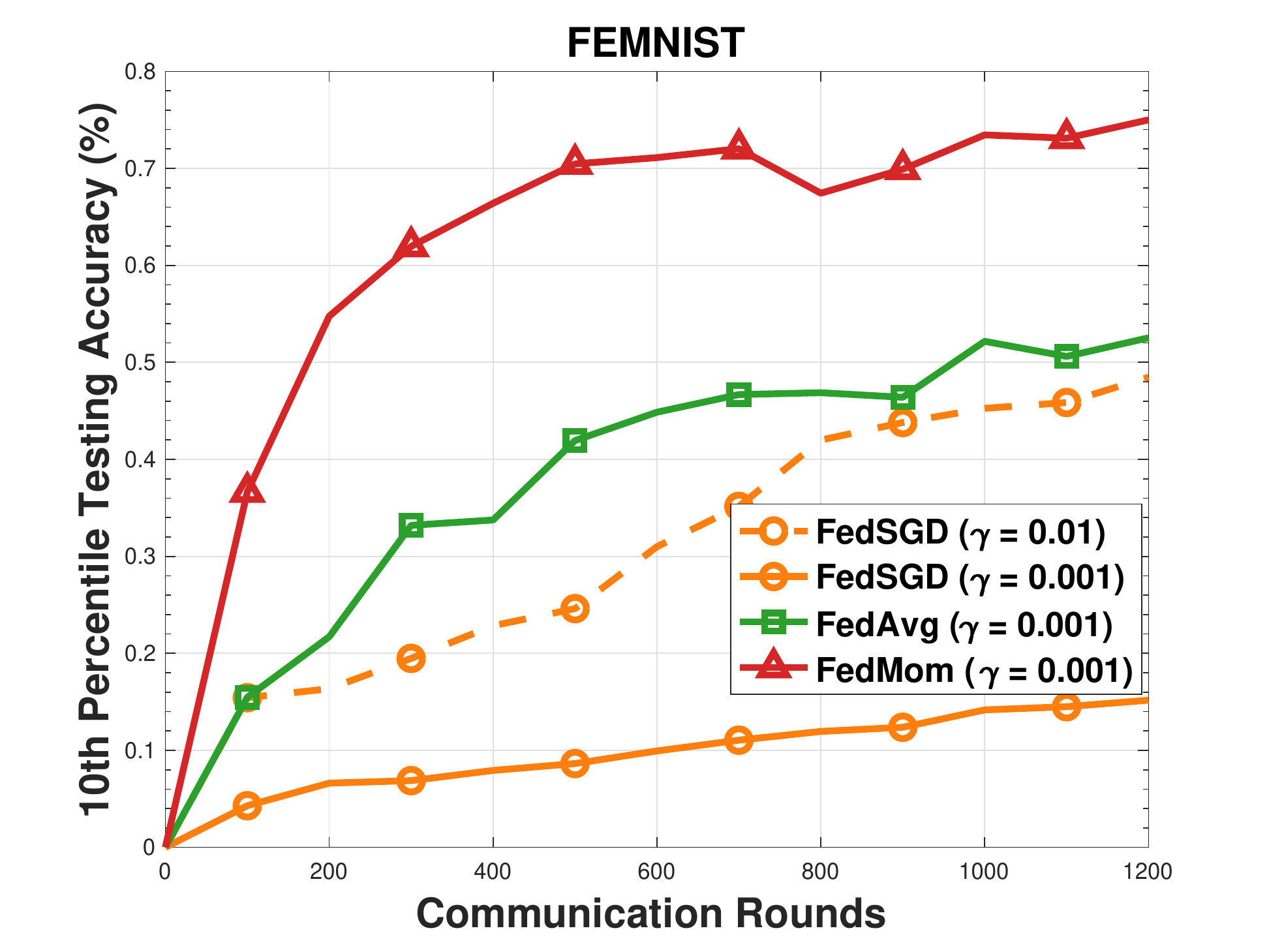}
	\end{subfigure}
	\begin{subfigure}[b]{0.328\textwidth}
		\centering
		\includegraphics[width=2.5in]{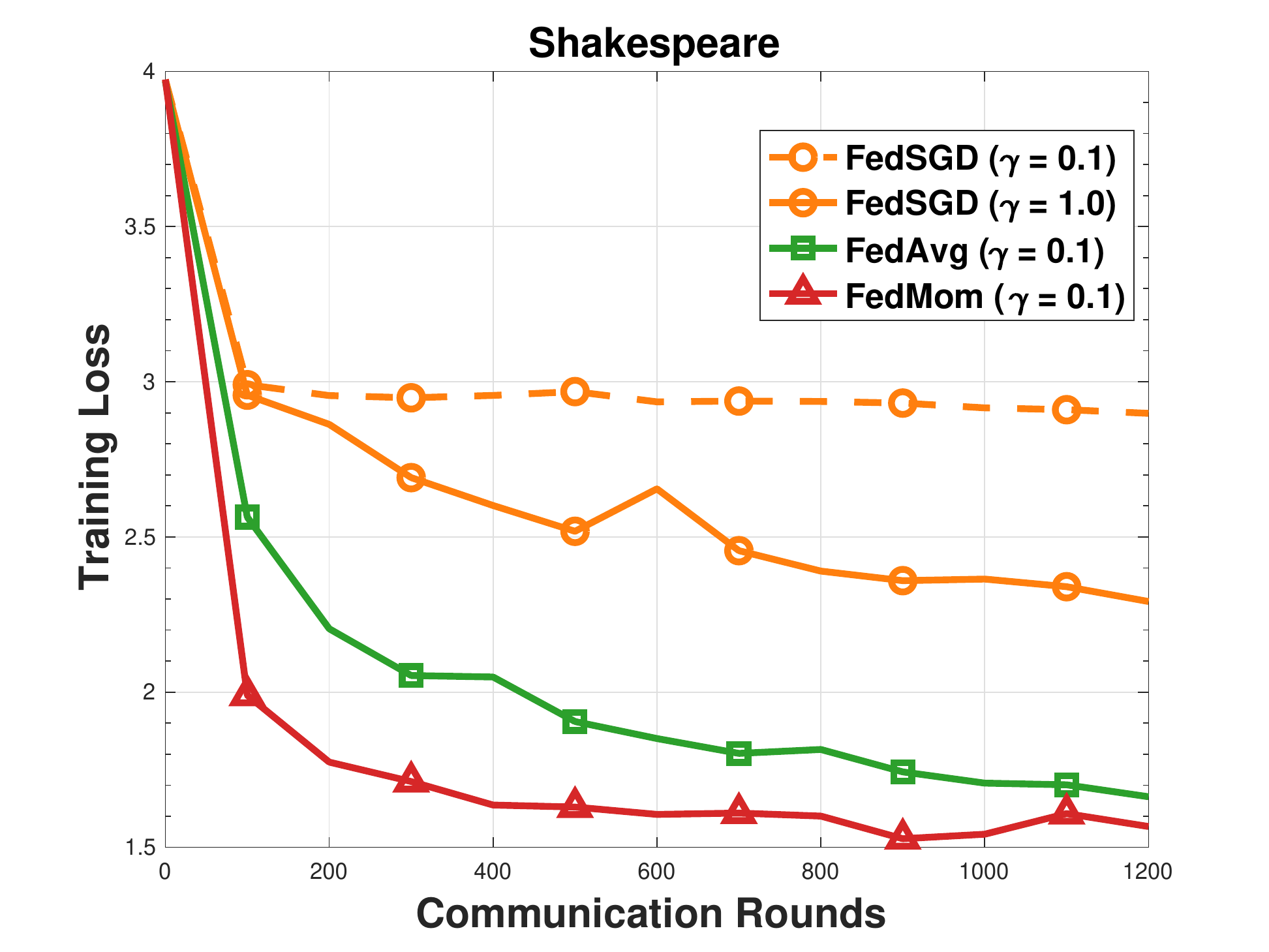}
	\end{subfigure}
	\begin{subfigure}[b]{0.328\textwidth}
		\centering
		\includegraphics[width=2.5in]{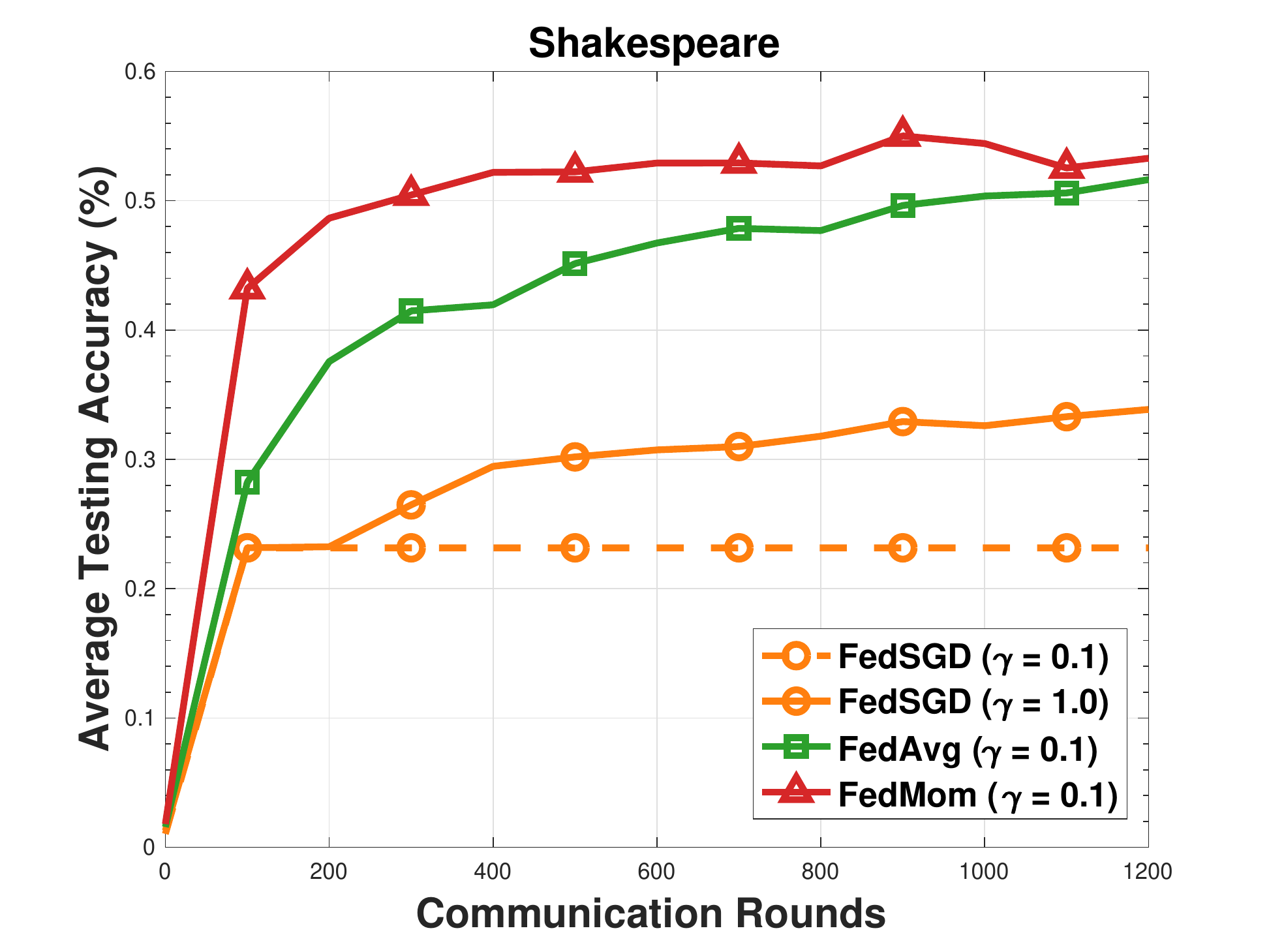}
	\end{subfigure}
	\begin{subfigure}[b]{0.328\textwidth}
		\centering
		\includegraphics[width=2.5in]{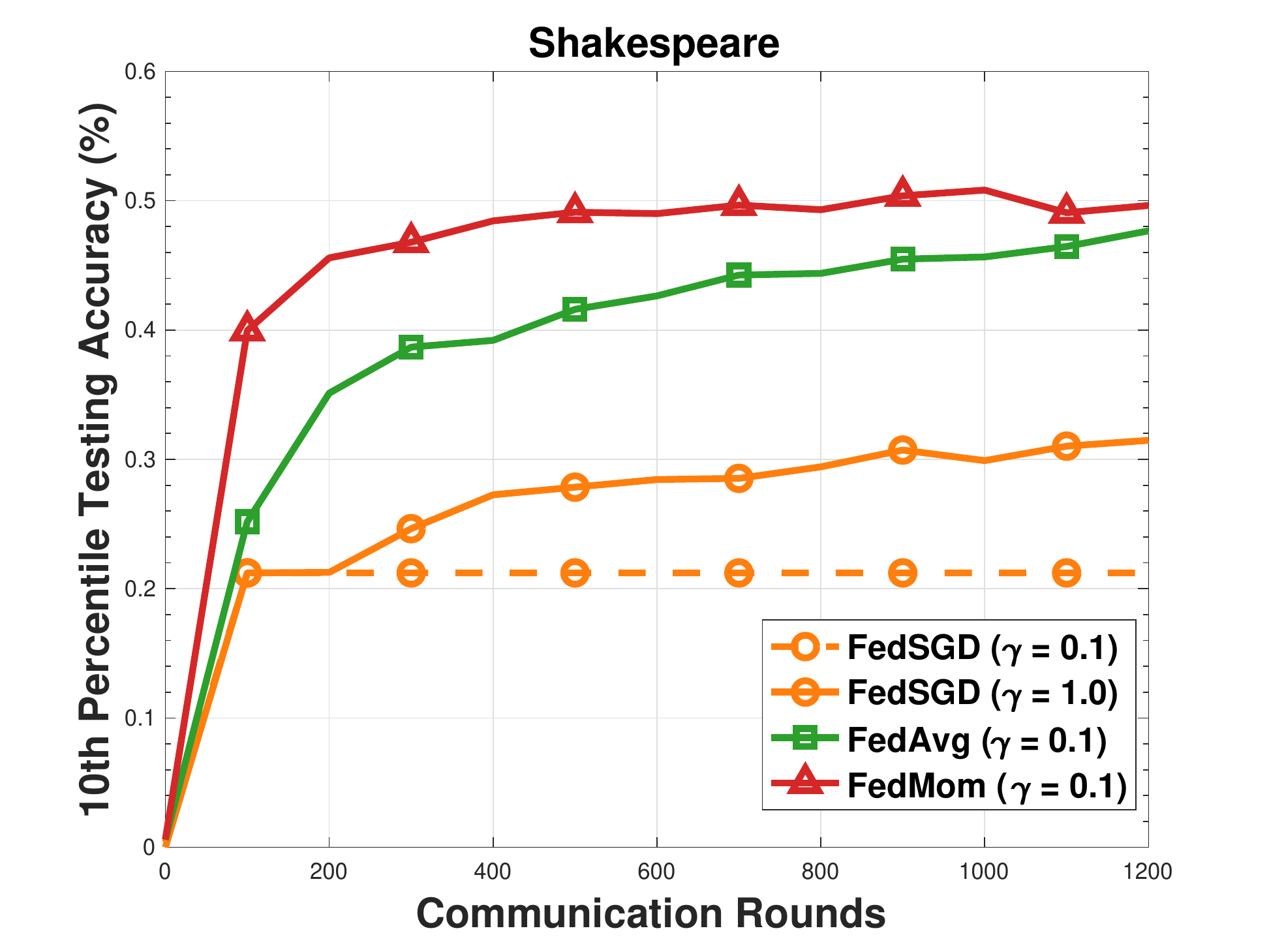}
	\end{subfigure}
	\caption{Performance of compared methods on FEMNIST and Shakespeare dataset. 10th percentile denotes that there are $10\%$ of the data values below it.   }
	\label{fig::cmp_femnist}
\end{figure*}

\section{Experiments}
\label{sec::exp}
We validate our analysis with simulated federated learning experiments on training deep neural networks. There are two targets: ($i$) we verify that the stochastic gradient $g_t$ in (\ref{iq_1_0000}) is a right direction towards target solution although it is biased; ($ii$) we demonstrate that our proposed method converges faster. All experiments are performed on a machine with Intel(R) Xeon(R) CPU E5-2650 v4 @ 2.20GHz and $4$ TITAN Xp GPUs.

\begin{figure*}[h]
	\centering
	\begin{subfigure}[b]{0.38\textwidth}
		\centering
		\includegraphics[width=2.8in]{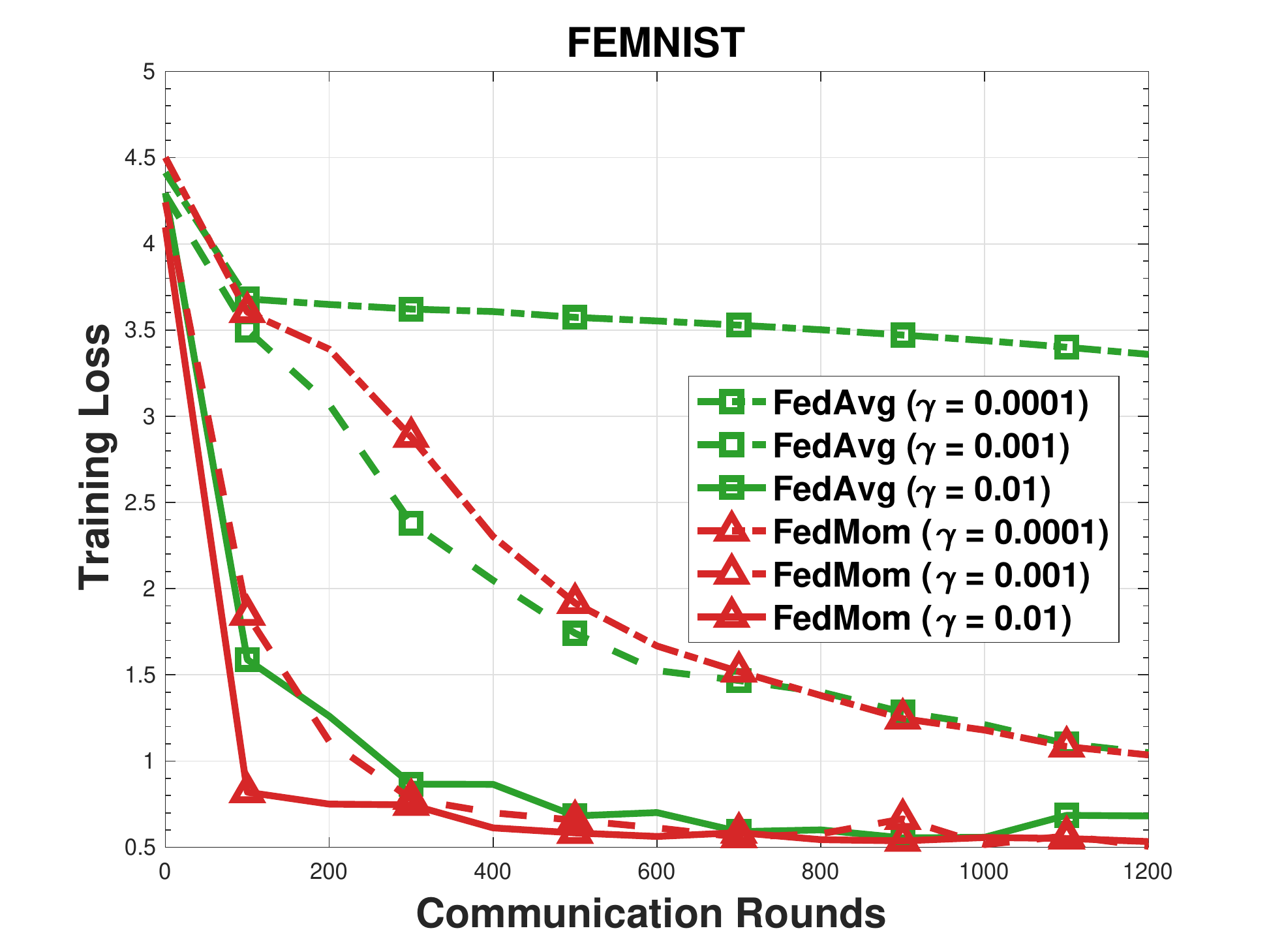}
	\end{subfigure}
	\begin{subfigure}[b]{0.38\textwidth}
		\centering
		\includegraphics[width=2.8in]{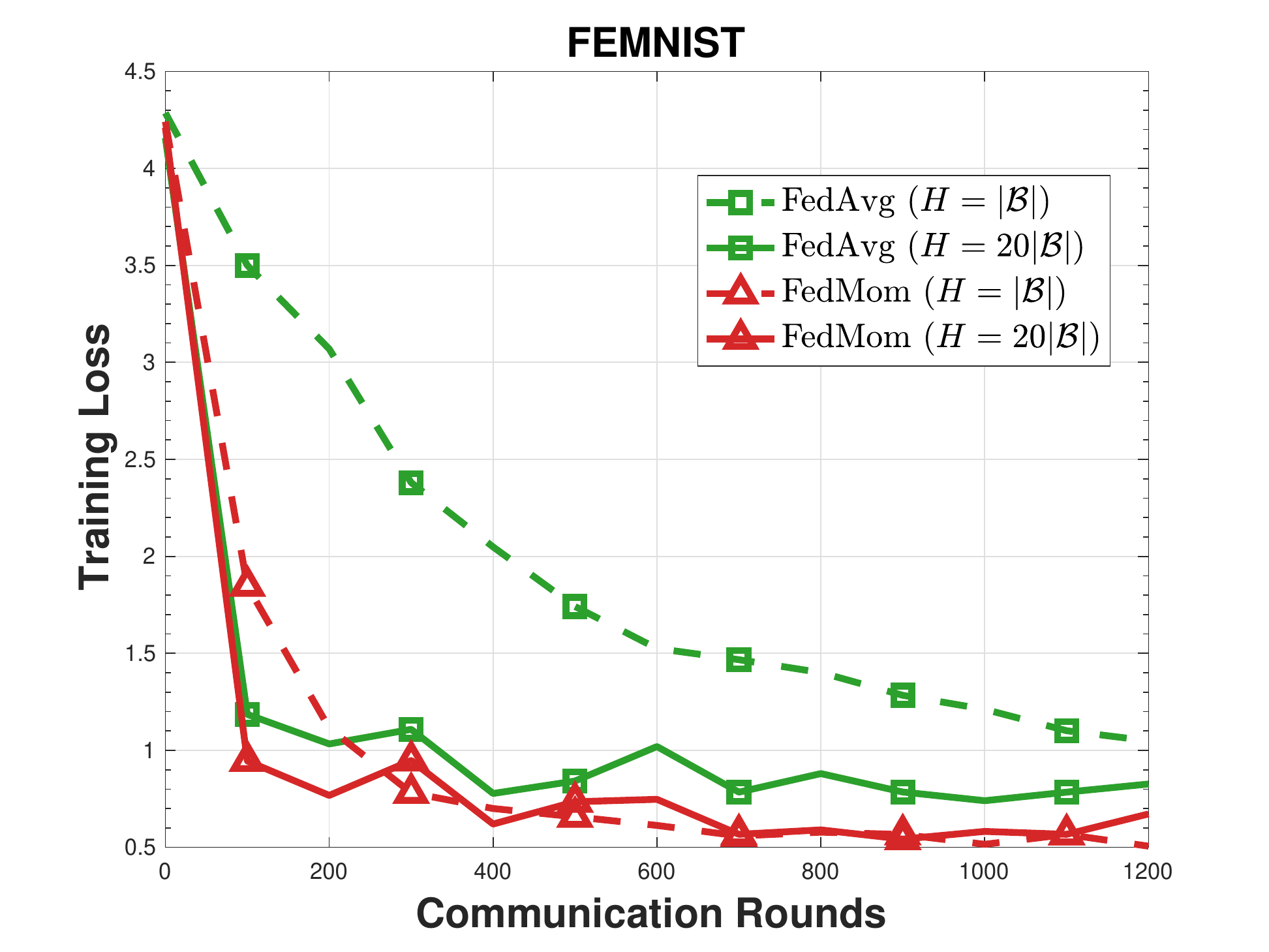}
	\end{subfigure}
	\caption{Training loss of FedMom and FedAvg methods when we vary the value of step size $\gamma$ and  local iterations $H$ on FEMNIST dataset. }
	\label{fig::fedmom_avg}
\end{figure*}

\subsection{Implementation Details}
Our implementations are based on the LEAF project \cite{caldas2018leaf}, which is a benchmark for federated learning algorithms. As in Table  \ref{table:data}, there are two tasks in the experiment: the digit recognition task on FEMNIST dataset \cite{caldas2018leaf} and the character prediction task on Shakespeare dataset \cite{mcmahan2016communication}.  For the digit recognition task, we use LeNet in the experiment \cite{lecun1998gradient};  for the task of character prediction, we train a character-level LSTM language model, which is 1-layer LSTM with $128$ nodes \cite{kim2016character}. To simulate the setting of federated learning, we set $M=2$ in all experiments, such that only two clients communicate with the server at each iteration. We let $\eta=\frac{K}{M}$ for two datasets; $|\mathcal{B}|$ represents the number of mini-batches in each epoch with batch size $B=10$.  For FedMom algorithm, we let $\beta = 0.9$ in all experiments.

\subsection{Direction of Biased Gradient }
\label{sec::exp_1}
We train neural networks using the FedAvg algorithm and visualize in Figure \ref{fig::sigma_femnist} the variations of $ \mathbb{E}_{S_t} \left<g_t, w_t - w^*  \right> $ during the course of optimization. Positive values denote that $g_t$ is heading towards the target solution. We approximate the expectation of $\left<g_t, w_t - w^*  \right>$ by taking the average of $\left<g_t, w_t - w^*  \right>$ every $100$ communication rounds. $w^*$ is set as $w_{2000}$, which is the model after $2000$ communication rounds. Taking the left figure on FEMNIST as an example, we have two observations.  First, the values are large at the beginning of optimization, which means the model is far from the target point at first and it moves towards the target point at a fast speed. After a number of rounds, the model is close to the target point and the value of $ \mathbb{E}_{S_t} \left<g_t, w_t - w^*  \right> $ becomes small.  Secondly, it is clear that the values of  $ \mathbb{E}_{S_t} \left<g_t, w_t - w^*  \right> $ are larger than $0$ most of the time.  We can also draw similar conclusions according to the result on Shakespeare dataset. Therefore, $g_t$ in FedAvg algorithm is an appropriate direction towards the target point, although it is biased.

\subsection{Investigating FedAvg and FedSGD}
\label{sec::exp_2}
In this section, we investigate why FedAvg converges faster than FedSGD empirically. We compare these two methods by training digit recognition task on FEMNIST dataset. In Figure \ref{fig::fedavg_sgd}, we visualize the difference  of  $ \mathbb{E}_{S_t}\left<g_t^{FedAvg}, w_t - w^*  \right>$ and $ \mathbb{E}_{S_t}\left< g_t^{FedSGD}, w_t - w^*  \right>$ during the course of optimization. In the leftmost figure, we can observe that the ``inner product'' of  FedAvg is larger than FedSGD all the time. At the same time, FedAvg converges faster than FedSGD regarding the training loss and testing accuracy. Experimental results indicate that FedAvg  is moving towards a better direction to the target point than FedSGD.

\subsection{ Convergence Comparison}
\label{sec::exp_3}
We compare the   convergence of FedSGD, FedAvg and FedMom, with results visualized in Figure \ref{fig::cmp_femnist}. There are two observations: ($i$) we know that FedAvg always converges faster than FedSGD by a large margin;  ($ii$) FedMom converges faster than FedAvg given similar step size $\gamma$ in all experiments.

In Figure \ref{fig::fedmom_avg}, we evaluate the proposed method by varying the step size $\gamma$ and local iterations $H$ on FEMNIST dataset. In the left figure, FedMom is always works better than FedAvg when we select a similar step size $\gamma$. Besides, it is clear that FedMom is more robust to the selection of step size $\gamma$. However, the performance of FedAvg with smaller $\gamma$ drops severely. When varying $H$, we observe similar results that FedMom performs more robust than FedAvg. Thus,  FedMom is a more practical method because it is easier to tune step size $\gamma$ and iterations $H$ than the compared methods.

\section{Conclusions}
We have investigated model averaging in the federated averaging algorithm, and have reformulated it as a gradient-based method with biased gradients. As a result, we derived the first convergence proof of the federated averaging algorithm for nonconvex problems. 
Based on our new perspective, we propose a novel federated momentum algorithm (FedMom) and prove that it is guaranteed to converge to critical solutions for non-convex problems. In the experiments, we compare FedMom with FedAvg and FedSGD by conducting simulated federated learning experiments on the digit recognition task and the character prediction task. Experimental results demonstrate that the proposed FedMom converges faster than the compared methods on both tasks and is easier to tune parameters as well.
More important, our research results open up new research directions for federated learning.

\bibliographystyle{ACM-Reference-Format}
\bibliography{acmart}

\appendix

\end{document}